\title{Towards Neural-Guided Program Synthesis\\for Linear Temporal Logic Specifications}
\author{
Alberto Camacho \\
University of Toronto \\
Vector Institute \\
\texttt{acamacho@cs.toronto.edu} \\
\And
Sheila A. McIlraith \\
University of Toronto \\
Vector Institute \\
\texttt{sheila@cs.toronto.edu} \\
}
\def\trace{\ensuremath{\pi}} 
\def\trace{\rho}
\newcommand{\st}{{\hspace{1mm}|\hspace{1mm}}}
\newcommand{\acite}[1]{\citeauthor{#1} \shortcite{#1}}
\renewcommand{\acite}[1]{\citep{#1}}
\renewcommand{\cite}[1]{\citep{#1}}
\newcommand{\set}[1]{\left\{ #1 \right\}}
\newcommand{\la}{\langle}
\newcommand{\ra}{\rangle}
\renewcommand{\st}{{\hspace{1mm}|\hspace{1mm}}}
\def\my-isdef{\hbox{~$\stackrel{\text{def}}{=}$~}}
\renewcommand{\paragraph}[1]{\vspace{1mm}\noindent\textbf{#1}}
\newtheorem{theorem}{Theorem}
\newtheorem{definition}{Definition}
\newcommand{\hide}[1]{}
\colorlet{tablerowcolor}{gray!10} 
\newcolumntype{L}[1]{>{\raggedright\arraybackslash}p{#1}}
\newcolumntype{C}[1]{>{\centering\arraybackslash}p{#1}}
\newcolumntype{R}[1]{>{@{\,}\raggedleft\arraybackslash}p{#1}@{\,}}
\def\LTL{\ensuremath{\mathsf{LTL}}\xspace}
\newcommand{\ltluntil}[2]{{#1} \mathcal{U} {#2}}
\newcommand{\ltlrelease}[2]{{#1} \mathcal{R} {#2}}
\newcommand{\ltlalways}[1]{\square{#1}}
\newcommand{\tikzcircle}[1]{\tikz[baseline=-0.5ex]\draw[black,radius=3pt,fill=#1] (0,0) circle ;}
\newcommand{\ltlnext}[1]{\tikzcircle{white}\xspace{#1}}
\newcommand{\ltleventually}[1]{\lozenge\xspace{#1}}
\newcommand{\indicator}[1]{\mathbbm{1}({#1})}
\newcommand{\C}[1]{\mathcal{#1}\xspace}
\newcommand{\UCW}{\text{UCW}\xspace}
\def\defeq{\coloneqq}
\def\play{\rho}
\newcolumntype{b}{X}
\newcolumntype{s}{>{\hsize=.5\hsize}X}
\DeclareMathOperator{\idx}{idx}
\newcommand{\bigO}[0]{\mathcal{O}}
\DeclareMathOperator{\round}{round}
\DeclareMathOperator{\floor}{floor}
\begin{document}

\maketitle


\begin{abstract}
Synthesizing a program that realizes a logical specification is a classical problem in computer science.
%
%
%
%
%
 We examine a particular type of program synthesis, where the  objective is to synthesize a strategy 
that reacts to a potentially adversarial environment while ensuring that all executions satisfy a \emph{Linear Temporal Logic} (\LTL) specification. 
%
Unfortunately, exact methods to solve 
so-called \emph{\LTL synthesis} via logical inference do not scale.
In this work, we cast \LTL synthesis as an optimization problem.
We 
employ
a neural network to learn a Q-function that is then used to guide search, and to construct programs that are subsequently verified for correctness.
Our method is unique in combining search with deep learning to realize \LTL synthesis.
%
%
In our experiments 
the learned Q-function
provides effective guidance for synthesis problems with relatively small specifications.
%
%

\end{abstract}

\section{Introduction}

Automated synthesis of programs from logical specification---\emph{program synthesis from specification}---is a classical problem in computer science that dates back to Church~(\citeyear{Church57}).
We are concerned with \emph{\LTL synthesis}
---the synthesis of a reactive module that interacts with the environment such that all executions satisfy a prescribed \emph{Linear Temporal Logic} (\LTL) formula. The formula  specifies the objective of the program and other sundry constraints including assumptions on the behavior of the environment, and safety and liveness constraints~\cite{pnu-ros-popl89}.
There is no simulator, no requirement of a Markovian transition system, and no reward function.  Rather, the logical specification provides all that is known of the behavior of the environment and mandated behavior of the program.
%
Synthesized programs are \emph{correct by construction}.
%
%
%
%
\LTL synthesis has a myriad of applications including the automated construction of logical circuits, game agents, and controllers for intelligent devices and safety-critical systems.


%


Programming from specification, and specifically \LTL synthesis has been studied intensely in theory and in practice (e.g., \cite{jacobs-syntcomp19}). 
We contrast it with \emph{programming by example} which synthesizes a program that captures the behavior of input-output data or program traces (e.g., \cite{gulwani16}).
In this category, differential approaches use deep learning for \emph{program induction}, 
to \emph{infer} the output of the program for new inputs (e.g. \cite{EllisST16,ParisottoMS0ZK17,ZhangRFLBMUZ18,ChenLS18}).
%
In neural-guided search, statistical techniques are used to guide search (e.g. \cite{BalogGBNT17}),
motivated in part by the fact that discrete search techniques may be more efficient than differentiable approaches alone \cite{GauntBSKKTT16}.

\LTL synthesis is a challenging problem. Its complexity is known to be 2EXP-complete, and exact methods to 
solve this problem 
via logical inference do not scale---in part because of the prohibitively large search space.
%
Thus, there is a need for exploration and development of novel approaches to \LTL synthesis that have the potential to scale.

In this work we make a first step towards applying two scalable techniques to \LTL synthesis. Namely, \emph{search} and \emph{learning}.
We are interested in answering this question:
    \emph{Can we learn good guidance for \LTL synthesis?}

Sophisticated search techniques, 
and in particular heuristic search, 
have the potential to scale to large spaces 
by means of efficient exploration and pruning.
%
%
%
However, searching over spaces induced by LTL formulae present unique challenges that have not yet resulted in effective heuristic search techniques.
Recent successes of deep learning for Markov Decision Processes (MDP) (e.g. \cite{MnihKSRVBGRFOPB15_dqn}) and complex sequential decision-making problems (e.g. \cite{Silver2018AGR}) have inspired us to explore the use neural networks to capture the complex structure of \LTL specifications with a view to providing guidance to search methods.

The main challenge that we had to address in this work was how to frame \LTL synthesis---a problem that has commonly been addressed using formal methods via logical inference---into an optimization problem.
To this end, we introduce a novel dynamic programming based formulation of \LTL synthesis that allows us to deploy statistical approaches, and in particular deep learning combined with search, to realize \LTL synthesis.
%
We 
employ
a neural network to learn a Q-function that is used to guide search, and to construct programs that are subsequently verified for correctness, thereby benefitting from the scalability of an approximate method while maintaining correct-by-construction guarantees.
%
%
In our experiments the learned Q-function provided effective guidance for the synthesis of relatively small specifications, solving a number of the benchmarks that serve as standard evaluation metrics for the \LTL synthesis community.
\section{Reactive LTL Synthesis}

The central problem that we examine in this paper is 
the synthesis of controllers for sequential decision-making in discrete dynamical environments.
\emph{Reactive synthesis} 
constructs
a strategy such that \emph{all} executions of the strategy realize a specified temporally extended property, regardless of how the environment behaves \cite{pnu-ros-popl89}.
The specified property may include reachability goals and safety and liveness properties, as well as assumptions regarding the behavior of the environment.
In contrast with MDPs, the environment dynamics are not stochastic nor necessarily Markovian but rather non-deterministic.

In \LTL synthesis, the temporally extended property to be satisfied takes the form of a \emph{Linear Temporal Logic} (\LTL) formula over a set of \emph{environment} ($\C{X}$) and \emph{system} ($\C{Y}$) variables (Definition \ref{def:specification}).
\LTL is a modal logic that extends propositional logic with temporal modalities to express temporally extended properties of infinite-length state traces. In a nutshell, 
$\ltlnext{}\varphi$ 
denotes that $\varphi$ holds in the next timestep, 
and $\varphi_1 \ltluntil{}{} \varphi_2$ denotes that $\varphi_1$ holds until $\varphi_2$ holds. 
Unary operators \emph{eventually} ($\ltleventually{}$) and \emph{always} ($\ltlalways{}$), and binary operator \emph{release} ($\ltlrelease{}{}$) can be defined using $\ltlnext{}$ and $\ltluntil{}{}$  (cf. \cite{pnueli77}). 
Formally, 
we say that an infinite trace $\trace = s_1, s_2, \ldots$ satisfies $\varphi$ ($\trace\models\varphi$, for short) iff $\trace,1\models \varphi$, where for every natural number $i\geq 1$:


%
%
\begin{itemize}
\item $\trace,i\models p$, for a propositional variable $p$, iff $p \in s_i$,
\item $\trace,i\models \neg \psi$ iff it is not the case that $\trace,i\models\psi$,
\item $\trace,i\models (\psi\wedge\chi)$ iff $\trace,i\models\psi$ and $\trace,i\models \chi$,
\item $\trace,i\models \ltlnext{}\varphi$ iff $\trace,i+1\models\varphi$,
\item $\trace,i\models \varphi\ltluntil{}\psi$ iff there exists a $j\geq i$ such that $\trace,j\models\psi$, and $\trace,k\models\varphi$ for every $i \leq k < j$.
\end{itemize}


\begin{definition}\label{def:specification}
An \LTL specification is a triplet $\langle \C{X}, \C{Y}, \varphi \rangle$, where $\C{X}$ and $\C{Y}$ are two finite disjoint sets of environment and system variables, respectively, and $\varphi$ is an \LTL formula over $\C{X} \cup \C{Y}$.
\end{definition}

\LTL synthesis is the task of computing a strategy that satisfies the specification (Definition \ref{def:synthesis}).
Strategies are functions that map non-empty finite sequences of subsets of $\C{X}$---i.e., elements of $(2^{\C{X}})^+ $---to a subset of $\C{Y}$. In general, strategies are non-Markovian and take history into account. 

\begin{definition}\label{def:synthesis}
The \emph{synthesis} problem for an \LTL specification  $\langle \C{X}, \C{Y}, \varphi \rangle$ consists in computing an agent strategy $f: (2^{\C{X}})^+ \rightarrow 2^{\C{Y}}$ such that, for any infinite sequence $X_1 X_2 \cdots$ of subsets of $\C{X}$, the sequence $(X_1 \cup f(X_1)) (X_2 \cup f(X_1 X_2)) \cdots$ satisfies $\varphi$. The \emph{realiazability} problem consists in deciding whether one such strategy exists.
\end{definition}

\paragraph{Example:} The \LTL specification $\langle \set{x}, \set{y}, \ltlalways{(x \leftrightarrow \ltlnext{y})} \rangle$ is realizable. One agent strategy that synthesizes the specification
outputs $Y_{n+1} = \set{y}$ whenever $X_n = \set{x}$, and $Y_{n+1} = \emptyset$ whenever $X_n = \emptyset$.
Note, the \LTL formula does not constrain the output of the system in the first timestep.

\subsection{Bounded Synthesis}
Traditional approaches to \LTL realizability and synthesis rely on automata transformations of the \LTL formula. Here, we review results from so-called \emph{bounded synthesis} approaches, that make use of \emph{Universal Co-B\"uchi Word} (\UCW) automata \cite{KupfermanV05}.
An important result in bounded synthesis is that \LTL realizability can be characterized in terms of the runs of \UCW automata, in a way that the space of the search for solutions can be \emph{bounded}.
The result, adapted from \acite{ScheweF07a}, is stated in Theorem \ref{thm:bounded_synthesis}.
Besides that, the computational advantage of bounded synthesis is that \UCW automata transformations can be done more efficiently in practice than to other types of automata (e.g., parity automata).
%

\paragraph{\UCW automata:} 
An \UCW automaton is a tuple $\langle Q, \Sigma, q_0, \delta, Q_F \rangle$, where $Q$ is a finite set of states, $\Sigma$ is the input alphabet (here, $\Sigma \defeq 2^{\C{X} \cup \C{Y}}$), $q_0 \in Q$ is the initial state, $\delta: Q \times \Sigma \rightarrow 2^{Q}$ is the (non-deterministic) transition function, and $Q_F \subseteq Q$ is a set of rejecting states.
Without loss of generality, we assume $\delta(q,\sigma) \neq \emptyset$ for each $(q,\sigma) \in Q \times \Sigma$.
A \emph{run} of ${A}_\varphi$ on a play $\rho = (X_1\cup Y_1) (X_2\cup Y_2) \cdots$ is a sequence $q_0 q_1 \cdots$ of elements of $Q$, where each $q_{i+1}$ is an element of $\delta(q_i, X_i \cup Y_i)$.
The \emph{co-B\"uchi} index of a run is the maximum number of occurrences of rejecting states.
A run of ${A}_\varphi$ is \emph{accepting} if its co-B\"uchi index is finite,
and a play $\play$ is \emph{accepting} if all the runs of  ${A}_\varphi$ on $\play$ are accepting.
%
An \LTL formula $\varphi$ can be transformed into an \UCW automaton that accepts all and only the models of $\varphi$ in worst-case exponential time \cite{KupfermanV05}.
%


\begin{theorem}
\label{thm:bounded_synthesis}
Let $\C{A}_\varphi$ be a \UCW transformation of \LTL formula $\varphi$.
An \LTL specification  $\langle \C{X}, \C{Y}, \varphi \rangle$ is realizable iff there exists a strategy $f: (2^{\C{X}})^+ \rightarrow 2^{\C{Y}}$ and $k < \infty$, worst-case exponential in the size of $\C{A}_\varphi$, such that, for any infinite sequence $X_1 X_2 \cdots$ of subsets of $\C{X}$, all runs of $\C{A}_\varphi$ on the sequence $(X_1 \cup f(X_1)) (X_2 \cup f(X_1 X_2)) \cdots$ hit a number of rejecting states that is bounded by $k$.
\end{theorem}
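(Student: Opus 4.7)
The plan is to establish both directions of the biconditional, with the bound $k$ obtained by building a finite-memory winning strategy and examining its product with $\C{A}_\varphi$.

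The ($\Leftarrow$) direction is immediate: if $f$ and $k<\infty$ are as in the statement, then on every play consistent with $f$ every run of $\C{A}_\varphi$ visits rejecting states at most $k$ times, hence finitely often, hence has finite co-B\"uchi index, so every run is accepting and the play is accepted by the universal automaton $\C{A}_\varphi$. Since $\C{A}_\varphi$ accepts exactly the models of $\varphi$, the play satisfies $\varphi$, and therefore $f$ realizes the specification.

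For the ($\Rightarrow$) direction I would first invoke the classical automata-theoretic pipeline for \LTL synthesis: determinize $\C{A}_\varphi$ into a deterministic parity automaton of size singly exponential in $|\C{A}_\varphi|$ (via a Safra- or Piterman-type construction), reduce synthesis to a two-player parity game on the product arena, and appeal to finite-memory determinacy of parity games to extract a winning strategy $f^\star$ for the system, implemented by a Moore machine $M$ of size $|M|$ at most singly exponential in $|\C{A}_\varphi|$. Then I would consider the finite product graph $G = M \times \C{A}_\varphi$, whose nodes are pairs $(m,q)$ and whose edges are induced jointly by an environment input $X \subseteq \C{X}$, the system response prescribed by $f^\star$ at $m$, and a transition of $\C{A}_\varphi$ from $q$ on $X$ together with that response. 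Every run of $\C{A}_\varphi$ on an $f^\star$-consistent play lifts to an infinite path in $G$.

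The key combinatorial step is to argue that no reachable node $(m,q)$ with $q \in Q_F$ lies on a cycle of $G$: otherwise, iterating the finite input sequence labelling that cycle would produce a play on which some run visits a rejecting state infinitely often, contradicting the fact that $f^\star$ wins. Consequently, on any infinite path each rejecting node is visited at most once, so the number of rejecting visits along any run is bounded by $|M|\cdot|Q_F| \leq |M|\cdot|\C{A}_\varphi|$. Taking $k := |M|\cdot|\C{A}_\varphi|$ then yields the required uniform bound, singly exponential in $|\C{A}_\varphi|$. The main obstacle is the first phase of the ($\Rightarrow$) direction—producing a finite-memory winning strategy of singly exponential size—since that is where the nontrivial automata theory (determinization of co-B\"uchi/parity automata and finite-memory determinacy of parity games) is concentrated. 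Once such a strategy is in hand, the graph-theoretic argument turning "no infinite rejection over a finite arena" into "rejecting states lie on no reachable cycle" is short and routine.
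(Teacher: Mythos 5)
Your argument is sound, but note that the paper itself offers no proof of this theorem: it is stated as adapted from Schewe and Finkbeiner, and the burden of proof is discharged by citation. What you have written is essentially a faithful reconstruction of the standard bounded-synthesis argument from that cited literature. The ($\Leftarrow$) direction is exactly right and genuinely immediate. For ($\Rightarrow$), your pipeline (dualize/determinize the \UCW into a deterministic parity automaton, invoke positional determinacy of parity games to obtain a finite-memory winning strategy $M$ of singly exponential size, then analyze the run graph $M \times \C{A}_\varphi$) is the intended route, and the key combinatorial step --- a reachable node $(m,q)$ with $q \in Q_F$ cannot lie on a cycle, since pumping the cycle's input labels would yield an $f^\star$-consistent play with a run visiting $Q_F$ infinitely often --- is correct and gives the bound $k \leq |M|\cdot|Q_F|$. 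Two minor points you could tighten: the Safra/Piterman determinization yields $2^{O(n\log n)}$ states rather than $2^{O(n)}$, which is still consistent with the theorem's loose phrase ``worst-case exponential''; and you should say explicitly that the infinite path witnessing the contradiction is obtained by concatenating a finite reaching path to $(m,q)$ with infinitely many traversals of the cycle, so that it is both a genuine run of $\C{A}_\varphi$ and consistent with $f^\star$. Neither is a gap, just polish.
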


\subsection{Automata Decompositions}
\label{sec:automata_decompositions_bounded_synthesis}

\UCW transformations of \LTL formulae are worst-case exponential, and can be a computational bottleneck. To mitigate for this, 
synthesis tools Acacia$^+$ \cite{BohyBFJR12-acaciaplus} and SynKit \cite{cam-mui-bai-mci-ijcai18} 
decompose the formula into a conjunction of subformulae, and then transform each subformula into \UCW automata.
Clearly, each subformula is potentially easier to transform into \UCW automata than the whole formula.
The results for bounded synthesis can be naturally extended to multiple automata decompositions, where the automata capture, collectively, \LTL satisfaction.

\begin{theorem}[adapted from \acite{BohyBFJR12-acaciaplus}]
\label{thm:bounded_synthesis_decomposed}
Let $\C{A}_i$ be \UCW transformations of \LTL formulae $\varphi_i$, $i=1..m$, and let $\varphi = \varphi_1 \land \cdots \land \varphi_m$.
\LTL specification  $\langle \C{X}, \C{Y}, \varphi \rangle$ is realizable iff there exists a strategy $f: (2^{\C{X}})^+ \rightarrow 2^{\C{Y}}$ and $k < \infty$ such that, for any infinite sequence $X_1 X_2 \cdots$ of subsets of $\C{X}$, the runs of each $\C{A}_i$ on the sequence $(X_1 \cup f(X_1)) (X_2 \cup f(X_1 X_2)) \cdots$ hit a number of rejecting states that is bounded by $k$.
\end{theorem}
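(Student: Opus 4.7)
The plan is to reduce Theorem~\ref{thm:bounded_synthesis_decomposed} to Theorem~\ref{thm:bounded_synthesis} via a product construction on the \UCW automata $\C{A}_1, \ldots, \C{A}_m$, and to check that the co-B\"uchi index of the product is quantitatively comparable to the indices of the components.

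\textbf{Product construction.} First I would define $\C{A} = \langle Q, \Sigma, q_0, \delta, Q_F \rangle$ as the synchronous product of $\C{A}_1, \ldots, \C{A}_m$, with $Q = Q_1 \times \cdots \times Q_m$, $\Sigma = 2^{\C{X}\cup\C{Y}}$, initial state the tuple of initial states, transitions $\delta((q^1,\ldots,q^m), \sigma) = \delta_1(q^1,\sigma) \times \cdots \times \delta_m(q^m,\sigma)$, and rejecting set $Q_F = \{(q^1,\ldots,q^m) \mid q^i \in Q_F^i \text{ for some } i\}$. A standard check shows that a play $\rho$ is accepted by $\C{A}$ iff it is accepted by every $\C{A}_i$, so $\C{A}$ is a \UCW for $\varphi = \varphi_1 \land \cdots \land \varphi_m$. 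Moreover, runs of $\C{A}$ on $\rho$ are in bijection with tuples $(r_1,\ldots,r_m)$ of runs of the $\C{A}_i$'s on $\rho$, obtained by coordinate-wise projection.

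\textbf{Relating co-B\"uchi indices.} The key observation is that a position of a product run is rejecting exactly when at least one component is in a rejecting state. Writing $N(r)$ for the number of rejecting visits of a run $r$ and $(r_1, \ldots, r_m)$ for the components of a product run $r$, this gives the two-sided bound
\begin{equation*}
\max_i N(r_i) \;\leq\; N(r) \;\leq\; \sum_i N(r_i).
\end{equation*}
Hence $N(r) \leq k$ implies $N(r_i) \leq k$ for every $i$, and conversely $N(r_i) \leq k$ for all $i$ implies $N(r) \leq mk$.

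\textbf{Forward direction.} Suppose $\langle \C{X}, \C{Y}, \varphi \rangle$ is realizable. Applying Theorem~\ref{thm:bounded_synthesis} to $\C{A}$ yields a strategy $f$ and a bound $k$ such that every run of $\C{A}$ on any play induced by $f$ has at most $k$ rejecting visits. By the inequality above, the corresponding component run of each $\C{A}_i$ on the same play has at most $k$ rejecting visits. Since any run of $\C{A}_i$ on such a play can be extended (using the non-emptiness of $\delta_j$) to a full product run, the uniform bound $k$ works for every $\C{A}_i$ individually.

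\textbf{Backward direction.} Conversely, if there exist $f$ and $k$ such that on every play induced by $f$ each $\C{A}_i$ has at most $k$ rejecting visits on every run, then every component run has finite co-B\"uchi index, so each $\C{A}_i$ accepts every such play, meaning the play satisfies every $\varphi_i$ and therefore $\varphi$. Thus $f$ realizes $\langle \C{X}, \C{Y}, \varphi \rangle$.

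\textbf{Main obstacle.} The only delicate point is the quantitative matching of bounds: a priori, the forward direction only grants a single bound $k$ from Theorem~\ref{thm:bounded_synthesis} on the product automaton, whereas the statement asks for a common $k$ for each component. The $\max_i N(r_i) \leq N(r)$ half of the inequality is what makes this work cleanly, so the main care needed is in defining $Q_F$ as the disjunctive rejecting set (rather than, say, a conjunctive one) and in verifying the run-projection bijection carefully.
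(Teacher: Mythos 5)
Your proof is correct, but note that the paper itself offers no proof of this theorem---it is stated as an adaptation of the Acacia$^+$ result and the surrounding text only remarks that the single-automaton result ``naturally extends'' to decompositions. Your route is a genuine reduction to Theorem~\ref{thm:bounded_synthesis} via a synchronous product \UCW with a disjunctive rejecting set, and the two-sided bound $\max_i N(r_i) \leq N(r) \leq \sum_i N(r_i)$ together with the run-projection bijection (and the paper's standing assumption that $\delta(q,\sigma)\neq\emptyset$, which you correctly invoke to extend a component run to a product run) does make both directions go through; the backward direction is even simpler than you suggest, since finiteness of every component index already gives satisfaction of each $\varphi_i$ and hence of $\varphi$ directly from Definition~\ref{def:synthesis}. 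The standard argument in the bounded-synthesis literature instead avoids materializing the product \UCW (which has $\prod_i |Q_i|$ states) and argues componentwise: realizability of $\varphi$ is equivalent to realizability of all $\varphi_i$ by a single strategy, each $\varphi_i$ is characterized by boundedness of the runs of $\C{A}_i$, and the uniform finite $k$ comes from the finiteness of the cross-product \emph{safety game} over $\sum_i |Q_i|$ counters (cf.\ Theorem~\ref{thm:bounded_synthesis_games_decomposed}). What your approach buys is a clean black-box reduction to Theorem~\ref{thm:bounded_synthesis}; what the componentwise approach buys is exactly the computational point of the decomposition, namely that one never needs the exponentially larger product automaton.
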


\section{Safety Games for Bounded Synthesis}
\label{sec:bounded_synthesis_and_safety_games}


%
Reactive synthesis is usually interpreted as a two-player game between the \emph{system} player, and the \emph{environment} player.
In each turn, the environment player makes a move by selecting a subset of \emph{uncontrollable} variables, $X\subseteq \C{X}$.
In response, the system player makes a move by selecting a subset of \emph{controllable} variables, $Y \subseteq \C{Y}$---thus, the sets of player actions are the powersets of $\C{X}$ and $\C{Y}$.
%
%
Game states and transitions are constructed by means of \emph{automata} transformations of the \LTL formula. 
%
%
%
In particular, bounded synthesis is interpreted as a \emph{safety game} where the agent is constrained to react in a way that (infinite-length) game plays must yield automaton runs that hit a bounded number of rejecting states (cf. Theorems \ref{thm:bounded_synthesis} and \ref{thm:bounded_synthesis_decomposed}).
 We formalize safety games below, and frame bounded synthesis as safety games in Section \ref{sec:safety_game_reformulations}.



\paragraph{Safety games:}
In this paper, a
two-player \emph{safety game} is a tuple $\langle Z_\mathit{env}, Z_\mathit{sys}, S, s_1, T, S_{bad} \rangle$,
where $Z_\mathit{env}$ and $Z_\mathit{sys}$ are sets of actions, $S$ is a set of states, $s_1 \in S$ is the initial state of the game, $T: S\times Z_\mathit{env} \times Z_\mathit{sys} \rightarrow S$ is a transition function, and $S_{bad} \subseteq S$ is a set of losing states.
Analogously to \LTL synthesis, we refer to the players as the environment and system.
The game starts in $s_1$, and is played an infinite number of turns. In each turn, the environment player selects an action $x\in Z_\mathit{env}$, and the system player reacts by selecting an action $y\in Z_\mathit{sys}$. 
If the game state in the $n$-th turn is $s_{n}$ and the players moves are $x_n$ and $y_n$,
then the game state transitions to $s_{n+1} = T(s_n,x_{n},y_{n})$.
Thus, game plays are infinite sequences of pairs $(x,y) \in Z_\mathit{env} \times Z_\mathit{sys}$ that yield sequences of game states $\rho = s_1 s_2 \cdots$.
A game play is winning (for the system player) if it yields a sequence of states that never hits
a losing state.
%
Solutions to a safety game are \emph{policies} $\pi: S \times Z_\mathit{env} \rightarrow Z_\mathit{sys}$ that guarantee that game plays are winning, regardless how the environment moves, provided that the system acts in each state as mandated by $\pi$.
%
Safety games can be solved in polynomial time in the size of the search space,
via fix-point computation of the set of \emph{safe states}---i.e., states from which the system player has a winning strategy to avoid falling into losing states. If the initial state is a safe state, then a winning strategy for the system player can be obtained by performing actions that prevent the environment from transitioning into an unsafe state.





\subsection{Safety Game Reformulations} 
\label{sec:safety_game_reformulations}

Bounded synthesis approaches reduce the synthesis problem into a series of {safety} games $G_k$, parametrized by $k$.
In those games, the environment and system players perform actions that correspond to the powersets of $\C{X}$ and $\C{Y}$ variables, respectively.
%
%
%
First, $\varphi$ is transformed into a \UCW automaton, ${A}_\varphi = \langle Q, \Sigma, q_0, \delta, Q_F \rangle$.
With a fixed order in the elements of $\C{X}$, $\C{Y}$, and $Q$, we identify each subset $X \subseteq \C{X}$ with a boolean vector $\bm{x} = \la x^{(1)}, \ldots, x^{(|\C{X}|)} \ra$ that indicates whether $x_i \in X$ by making the $i$th element in $\bm{x}$ true (similarly with subsets of $\C{Y}$ and $Q$).
%
Game states are vectors $\bm{s} = \la s^{(1)}, \ldots, s^{(|Q|)} \ra$ that do bookkeeping of the co-B\"uchi indexes of the runs of ${A}_\varphi$ on the partial play that leads to $\bm{s}$.
More precisely, each element $s^{(i)}$ is an integer 
that tells the maximum co-B\"uchi index among all the runs of $A_\varphi$ on the partial play that finish in $q^{(i)}$. If none of the runs finish in $q^{(i)}$, then $s^{(i)} = -1$. 
The transition function, $T$, progresses the automaton runs captured in a state $\bm{s}$ according to the move of the environment and system players.
\begin{align*}
    {T}(\bm{s}, \bm{x}, \bm{y}) &\defeq \la s'^{(1)}, \ldots, s'^{(|Q|)} \ra \text{, where} \\
    t^{(i)} &\defeq  \max \{ s^{(j)} \st  q_i \in \delta(q_j, \bm{x} \Vert \bm{y}) \} \\
    s'^{(i)} &\defeq  t^{(i)} + \indicator{q_i \in Q_F}  \cdot \indicator{t^{(i)} > -1} \\
\end{align*}

We write $\idx(\bm{s})$ to refer to the \emph{co-B\"uchi index of game state} $\bm{s}$, that we define as the maximum co-B\"uchi index among all the runs captured by $\bm{s}$. Formally, 
$\idx(\bm{s}) \defeq \max_i s^{(i)}$.
Losing states in $S_{bad}$ are those $\bm{s}$ with  $\idx(\bm{s}) = k$.
%
%
%
%
%
%
Intuitively, in those games the environment player aims at maximizing the index of the states visited, whereas the system player aims at keeping this number bounded.
Theorem \ref{thm:bounded_synthesis_games} relates \LTL realizability with the existence of solutions to those safety games. 
For computational reasons, we redefine a transition function $T_k$ that reduces the search space to those states with co-B\"uchi index bounded by $k$. 

\begin{theorem}
[adapted from \cite{ScheweF07a}]\label{thm:bounded_synthesis_games}
\LTL specification $\langle \C{X}, \C{Y}, \varphi \rangle$ is realizable iff
the
safety game $ G_k = \langle 2^\C{X}, 2^\C{Y}, S, \bm{s_1}, T_k, S_{bad} \rangle$ 
constructed from any \UCW transformation of $\varphi$, $A_\varphi = \langle Q, \Sigma, q_0, \delta, Q_F \rangle$,
has a solution for some $k < \infty$, worst-case exponential in the number of states in ${A}_\varphi$, where:
\begin{align*}
    S &\defeq [-1, k]^Q \\
    \bm{s_1} &\defeq \langle 0, -1, \ldots, -1 \rangle \\
    {T_k}(\bm{s}, \bm{x}, \bm{y}) &\defeq \la s'^{(1)}, \ldots, s'^{(|Q|)} \ra \text{, where} \\
    s'^{(i)} &\defeq  \min (k,{T}(\bm{s}, \bm{x}, \bm{y})^{(i)} ) \\
    S_{bad} &\defeq \set{\bm{s} \in S \st \idx(\bm{s}) = k}
\end{align*}
\end{theorem}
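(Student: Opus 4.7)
The plan is to reduce the theorem to Theorem~\ref{thm:bounded_synthesis} by establishing that winning policies of $G_k$ are in correspondence with strategies $f$ that keep all runs of $\C{A}_\varphi$ below co-B\"uchi index $k$. The bridge is a bookkeeping invariant that says: for any finite play $(X_1\cup Y_1)\cdots(X_n \cup Y_n)$, the game state $\bm{s_n}$ reached by iterating the uncapped transition function $T$ from $\bm{s_1}$ has $i$-th component equal to $\max\{$co-B\"uchi index of $\xi\}$ over all runs $\xi = q_0 q_{j_1}\cdots q_{j_n}$ of $\C{A}_\varphi$ on that prefix with $q_{j_n} = q^{(i)}$, or $-1$ if no such run exists. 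This invariant is immediate by induction on $n$ from the definition of $T$: the $\max$ in $t^{(i)}$ aggregates indices of predecessor runs that can transition into $q^{(i)}$, and the correction term $\indicator{q^{(i)}\in Q_F}\cdot\indicator{t^{(i)}>-1}$ increments the index exactly when the extended run lands on a rejecting state.

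Next, I would relate $T$ and $T_k$: the two coincide as long as no component exceeds $k$, and by construction $\idx(\bm{s_n}) = k$ precisely when some run reaches co-B\"uchi index at least $k$. Thus ``the play avoids $S_{bad}$ forever'' is equivalent to ``every run of $\C{A}_\varphi$ on the play has co-B\"uchi index strictly below $k$.''

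For the ($\Leftarrow$) direction, given a winning policy $\pi$ for $G_k$, define an agent strategy by $f(X_1\cdots X_n) \defeq \pi(\bm{s_{n-1}},\bm{x_n})$, where $\bm{s_{n-1}}$ is the game state obtained by iterating $T_k$ along the play induced by $f$ on $X_1\cdots X_{n-1}$; the play is well-defined because $\pi$ is memoryless on $S$. Since $\pi$ wins $G_k$, the invariant above guarantees every run of $\C{A}_\varphi$ on $(X_1\cup f(X_1))(X_2\cup f(X_1X_2))\cdots$ has co-B\"uchi index bounded by $k-1$, so Theorem~\ref{thm:bounded_synthesis} yields realizability. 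For the ($\Rightarrow$) direction, use Theorem~\ref{thm:bounded_synthesis} to obtain a strategy $f$ and a bound $k'$ (worst-case exponential in $|\C{A}_\varphi|$) such that every run stays at or below $k'$. Set $k \defeq k'+1$. Playing $G_k$ according to the actions $f$ prescribes on the unfolded input history yields a (history-dependent) winning strategy in the safety game; since safety games are memorylessly determined, this implies the existence of a memoryless winning policy $\pi: S\times 2^{\C{X}}\to 2^{\C{Y}}$, i.e.\ a solution to $G_k$.

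I expect the main obstacle to be stating and verifying the bookkeeping invariant cleanly, especially handling the initial state $\bm{s_1}=\la 0,-1,\dots,-1\ra$ and the potential off-by-one between ``index bounded by $k$'' in Theorem~\ref{thm:bounded_synthesis} and ``$\idx(\bm{s})=k$ is losing'' in the definition of $S_{bad}$. Once the invariant is in place, the bidirectional translation between strategies and policies is straightforward, and the complexity bound transfers verbatim from Theorem~\ref{thm:bounded_synthesis}. Memoryless determinacy of safety games is a standard result I would cite rather than prove.
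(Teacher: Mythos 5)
The paper does not actually prove this theorem: it is stated as ``adapted from'' Schewe and Finkbeiner and left to the cited literature, so there is no in-paper argument to compare against. Your proposal is a faithful reconstruction of the standard bounded-synthesis argument, and I find it sound. The bookkeeping invariant you state is exactly the intended semantics the paper gives informally for game states ($s^{(i)}$ records the maximum co-B\"uchi index over runs of $\C{A}_\varphi$ on the prefix ending in $q^{(i)}$, or $-1$), and your induction via the $\max$/indicator structure of $T$ is the right way to make it precise. Your backward-direction construction of $f$ from $\pi$ by unfolding coincides with the recursion the paper itself gives later in its ``Extraction of Solutions'' subsection, and the forward direction via a history-dependent winning strategy plus memoryless determinacy of safety games is standard and correctly invoked. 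Three small points to tighten when writing this up: (i) in the base case of the invariant you must handle $q_0 \in Q_F$, where the run already has one rejecting occurrence but $\bm{s_1}$ records $0$ --- this only shifts $k$ by one and is absorbed by the ``for some $k<\infty$'' quantifier, but it should be said; (ii) the definition of $t^{(i)}$ needs the convention $\max\emptyset = -1$ so that unreachable automaton states stay at $-1$; and (iii) your subscripting $f(X_1\cdots X_n) \defeq \pi(\bm{s_{n-1}}, \bm{x_n})$ is off by one relative to the paper's indexing (the state consulted at step $n$ is $\bm{s_n}$, the result of $n-1$ transitions from $\bm{s_1}$). None of these affects the substance. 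You should also note explicitly that under $T$ a component can increase by at most one per step, so no run index can skip over $k$; this is what makes ``avoids $S_{bad}$'' equivalent to ``all indices stay below $k$'' under the capped $T_k$.
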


\subsection{Automata Decompositions}

%

Theorem \ref{thm:bounded_synthesis_games} can be extended to handle multiple automata decompositions of the \LTL formula.
The result, stated in Theorem \ref{thm:bounded_synthesis_games_decomposed}, is adapted from known results that proof the correctness of bounded synthesis approaches employed by Acacia$^+$ and SynKit, which exploit decompositions to improve scalability \cite{BohyBFJR12-acaciaplus,cam-mui-bai-mci-ijcai18}.
We presume the \LTL specification formula is a conjunction of $m$ subformulae $\varphi = \varphi_1 \land \cdots \land \varphi_m$.
For each $i=1..m$, 
let $G_k^{(i)}$ be the safety game constructed as described in Theorem \ref{thm:bounded_synthesis_games} from a \UCW automaton transformation $\C{A}_i$ of $\varphi_i$.
%
The idea is to construct a cross-product safety game $G_k$
that maintains the dynamics of each $G_k^{(i)}$ in parallel.

\begin{theorem}\label{thm:bounded_synthesis_games_decomposed}
Let $\varphi = \varphi_1 \land \cdots \land \varphi_m$ be an \LTL formula, and
let $G_k^{(i)} = \langle 2^\C{X}, 2^\C{Y}, S^{(i)}, \bm{s_1^{(i)}}, T_k^{(i)}, S_{bad}^{(i)} \rangle$ be safety games associated to each $\varphi_i$, $i=1..m$, and constructed as described in Theorem \ref{thm:bounded_synthesis_games}.
\LTL specification $\langle \C{X}, \C{Y}, \varphi \rangle$ is realizable iff
the
safety game $ G_k = \langle 2^\C{X}, 2^\C{Y}, S, \bm{s_1}, {T_k}, S_{bad} \rangle$ 
constructed as described below
has a solution for some $k < \infty$.
\begin{align*}
    S &\defeq S^{(1)} \times \cdots \times S^{(m)}\\
    \bm{s_1} &\defeq \la \bm{s^{(1)}_1}, \ldots, \bm{s^{(m)}_1} \ra\\
    {T_k}(\bm{s}, \bm{x}, \bm{y}) &\defeq \la {T_k}^{(1)}(\bm{s^{(1)}}, \bm{x}, \bm{y}), \ldots, {T_k}^{(m)}(\bm{s^{(m)}}, \bm{x}, \bm{y}) \ra \\
    S_{bad} &\defeq \set{\bm{s} \st \idx(\bm{s}) = k}
\end{align*}
\end{theorem}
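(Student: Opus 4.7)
The plan is to prove the two directions separately, leveraging Theorem~\ref{thm:bounded_synthesis_games} component-wise and Theorem~\ref{thm:bounded_synthesis_decomposed} for the decomposition into conjuncts. The central structural observation to establish first is that the product game $G_k$ is essentially a synchronous simulation of the component games $G_k^{(i)}$: since the transition function ${T_k}$ is defined coordinate-wise on the same input actions $(\bm{x},\bm{y}) \in 2^\C{X} \times 2^\C{Y}$, any play $\rho = \bm{s_1}\bm{s_2}\cdots$ of $G_k$ projects to plays $\rho^{(i)} = \bm{s_1^{(i)}}\bm{s_2^{(i)}}\cdots$ of each $G_k^{(i)}$ that are in one-to-one correspondence, and conversely any tuple of component plays induced by the same environment/system action sequence reassembles to a play of $G_k$. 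Moreover, because $\idx(\bm{s}) = \max_i \idx(\bm{s^{(i)}})$, a state $\bm{s}$ of $G_k$ is losing iff at least one projection $\bm{s^{(i)}}$ is losing in $G_k^{(i)}$.

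For the $(\Rightarrow)$ direction, suppose $\langle \C{X},\C{Y},\varphi \rangle$ is realizable. By Theorem~\ref{thm:bounded_synthesis_decomposed}, there is an LTL strategy $f$ and a bound $k<\infty$ such that for every infinite $X_1 X_2 \cdots$, all runs of each $\C{A}_i$ on $(X_1 \cup f(X_1))(X_2 \cup f(X_1 X_2))\cdots$ hit at most $k$ rejecting states. Applying the (already established) correctness of Theorem~\ref{thm:bounded_synthesis_games} in each conjunct yields, for each $i$, a winning system policy $\pi^{(i)}$ in $G_k^{(i)}$ that is realized by $f$, i.e., $\pi^{(i)}$ and $f$ induce the same response to any environment sequence. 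Since the product play built from any $(\bm{x_n})_{n\geq 1}$ has its $i$-th projection equal to the play of $G_k^{(i)}$ induced by $f$, no coordinate ever reaches a losing state, hence $\bm{s_n}\notin S_{bad}$ for all $n$. Defining $\pi(\bm{s},\bm{x}) := f(\cdot)$ along the (unique) history leading to $\bm{s}$ gives a winning policy in $G_k$; a routine argument shows this can be made memoryless (state-based) because the game states already summarise the relevant automaton histories.

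For the $(\Leftarrow)$ direction, suppose $G_k$ has a memoryless winning policy $\pi$. Unrolling $\pi$ against arbitrary environment sequences produces plays that stay in $S \setminus S_{bad}$. By the coordinate-wise structure, the projected play in each $G_k^{(i)}$ also avoids $S_{bad}^{(i)}$, so the induced coordinate strategies are winning in each $G_k^{(i)}$. Extract an LTL strategy $f: (2^{\C{X}})^+ \to 2^\C{Y}$ by setting $f(X_1\cdots X_n) := \pi(\bm{s_n},\bm{x_n})$ where $\bm{s_n}$ is the product state reached from $\bm{s_1}$ under the actions $(\bm{x_1},\pi(\bm{s_1},\bm{x_1})),\ldots,(\bm{x_{n-1}},\pi(\bm{s_{n-1}},\bm{x_{n-1}}))$. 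Invoking Theorem~\ref{thm:bounded_synthesis_games} in each conjunct, each $\C{A}_i$ run on the induced play has co-Büchi index at most $k$, so by Theorem~\ref{thm:bounded_synthesis_decomposed} the specification $\langle \C{X},\C{Y},\varphi \rangle$ is realizable.

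The main obstacle I anticipate is the bookkeeping that ties together \emph{three} different representations of strategies---LTL strategies $f$ over input histories, memoryless safety-game policies $\pi^{(i)}$ over component states, and the product policy $\pi$---and showing that the correspondences between them commute with play projection. In particular, one must verify that the product state $\bm{s_n}$ is a faithful summary of the histories on which all $\C{A}_i$ runs depend, so that decisions made by $\pi$ in $G_k$ can be replayed coherently in each $G_k^{(i)}$ and lifted back to an LTL strategy; this amounts to a careful induction on the length of the history, checking that ${T_k}$ and the coordinate $T_k^{(i)}$ agree under the projection at every step. Everything else is a direct consequence of Theorems~\ref{thm:bounded_synthesis_decomposed} and~\ref{thm:bounded_synthesis_games}.
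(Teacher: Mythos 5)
The paper itself offers no proof of Theorem~\ref{thm:bounded_synthesis_games_decomposed}: it is stated as ``adapted from known results'' and deferred to the cited Acacia$^+$ and SynKit papers. Your argument is the standard synchronous-product construction and is essentially sound, so there is nothing in the paper to contrast it with; what follows is a check of your sketch on its own terms. The two load-bearing observations are exactly the ones you isolate: (i) $T_k$ acts coordinate-wise on a shared action alphabet, so plays of $G_k$ and tuples of component plays driven by the same action sequence are in bijection, and (ii) $\idx(\bm{s})=\max_i \idx(\bm{s^{(i)}})$, so $\bm{s}\in S_{bad}$ iff some coordinate is in $S_{bad}^{(i)}$. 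The one point that genuinely matters, and which you handle correctly, is routing the forward direction through Theorem~\ref{thm:bounded_synthesis_decomposed} to obtain a \emph{single} strategy $f$ and a \emph{uniform} bound $k$ across all conjuncts; realizability of each $\varphi_i$ separately (with possibly different strategies or different $k_i$) would not suffice, since the conjuncts share the system variables. In the backward direction, the claim that each $\C{A}_i$-run has bounded index follows directly from the bookkeeping semantics of the component states rather than from ``invoking Theorem~\ref{thm:bounded_synthesis_games}'', but that is the content of that theorem's correspondence anyway. Two cosmetic points you inherit from the paper's statements rather than introduce yourself: the off-by-one between runs hitting ``a number of rejecting states bounded by $k$'' and the losing condition $\idx(\bm{s})=k$ (one may need to instantiate the game at $k+1$), and the fact that memoryless determinacy of safety games is used implicitly when you convert the history-dependent $f$ into a state-based $\pi$. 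Neither affects the validity of the argument.
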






\subsection{Extraction of Solutions}
So far the results in Theorems \ref{thm:bounded_synthesis_games} and \ref{thm:bounded_synthesis_games_decomposed} relate \LTL realizability with the existence of solutions to the reduced safety games, but do not state how we can synthesize solutions to the original \LTL specification.
In effect,
a winning strategy $f: (2^{\C{X}})^+ \rightarrow 2^{\C{Y}}$ that realizes the \LTL specification  $\langle \C{X}, \C{Y}, \varphi \rangle$ can be directly constructed from a policy $\pi:  S \times X \rightarrow Y$ that solves the game $G_k$.
Recursively, 
\begin{align*}
    f(\bm{x}) &\defeq \pi(\bm{s_1}, \bm{x})\\
    f(\bm{x_1} \cdots \bm{x_m}) &\defeq \pi(\bm{s_m}, \bm{x_m}) \\
    \bm{s_{m+1}} &\defeq T(\bm{s_m},\bm{x_m}, \pi(\bm{s_m},\bm{x_m}))
\end{align*}
An advantageous property of solutions to the reduced safety games is that 
%
they
provide means to implement $f$ compactly in the form of a controller, or finite-state machine with internal memory that stores the current game state, and whose output to a sequence $\bm{x_1} \cdots \bm{x_m}$ depends only on the last environment move, $\bm{x_m}$, and the internal state $\bm{s_m}$---not on the entire history.

\subsection{Computational Complexity}
The transformation of \LTL into an \UCW automaton is worst-case exponential in the size of the formula.
For a fixed $k<\infty$, 
the size of the state space is $\bigO(k^{\Sigma |Q_i|})$, that is, exponential in the sum of automata sizes.
The game $G_k$ can be solved by fix-point computation in polynomial time in the size of the search space.
Decompositions of the \LTL formula have the potential to produce smaller \UCW automata and games with smaller size.
Note, however, that 
the worst-case computational complexity is doubly exponential in the size of the \LTL specification formula.

\section{Dynamic Programming for LTL Synthesis}
\label{sec:dyncamic_programming_for_ltl_synthesis}

In Section \ref{sec:bounded_synthesis_and_safety_games} we showed how \LTL synthesis can be cast as a series of safety games. Modern approaches to bounded synthesis use different technology (e.g. BDDs, SAT) to solve those games via logical inference (e.g. \cite{JobstmannB06_lily,BohyBFJR12-acaciaplus}). Unfortunately, exact methods have limited scalability, because the size of the search space grows (worst-case) doubly-exponentially with the size of the \LTL specification formula.
In this section we study the use of dynamic programming to solve these safety games.

Dynamic programming gradually approximates optimal solutions, and is an alternative to exact methods that do logical inference.
%
%
Dynamic programming has been widely used in the context of \emph{Markov Decision Processes} (MDPs), where the objective is to compute strategies that optimize for reward-worthy behavior.
The challenge for us in this work is to frame \LTL synthesis as an optimization problem, for which dynamic programming techniques can be used.
There are at least two significant differences between the dynamics of MDPs and safety games, that prevent us from using off-the-shelf methods. 
First, state transitions in an MDP are stochastic, whereas in a game the transitions are non-deterministic. Second, in an MDP the agent receives a reward signal upon performing an action, and optimizes for maximizing the expected (discounted) cumulative reward. In contrast, in a safety game the agent does not receive reward signals, and aims at winning the game.
The same differences appear with so-called \emph{Markov games} \cite{Littman94}.

As it is common in dynamic programming, we first formulate a set of Bellman equations for safety games. Bellman equations describe the \emph{value} of a state $\bm{s}$, $V(\bm{s}),$ in terms of the values of future states in a sequential decision-making process, assuming both players act ``optimally'' -- recall, the environment aims at maximizing the maximum index of visited states, and the system aims at keeping this number bounded. 
%
Solutions to the safety game can be obtained by performing, in each state, a greedy action that minimizes the (in our case, worst-case) value of the next state.
Bellman equations can be solved using dynamic programming -- for instance, value iteration.
We show that value iteration on this set of Bellman equations is guaranteed to converge to a solution to the associated safety game.
In the next section, we show how learning methods -- in particular, an adaptation of (deep) Q-learning -- can be used to provide guidance.
\subsection{Bellman Equations}

In our 
set of Bellman equations, we conceptualize $V(\bm{s})$ as a function that tells the maximum index of the states visited from $\bm{s}$, assuming that both players act optimally (cf. Theorem \ref{thm:bellman_one_ucw}).
%
Solutions are value
-- or, equivalently, Q-value -- functions that satisfy the Bellman equations in each $\bm{s} \in S$.
%


\begin{theorem}
\label{thm:bellman_one_ucw}
An \LTL specification $\langle \C{X}, \C{Y}, \varphi \rangle$ is realizable iff there exists a 
value
function $V$ with $V(\bm{s_1}) < \infty$ 
that is solution to the Bellman equations
\begin{align*}
    & V(\bm{s}) = \max_{\bm{x}} Q(\bm{s},\bm{x}) \\
    & Q(\bm{s},\bm{x}) = \max(\idx(\bm{s}), \min_{\bm{y}} V({T}(\bm{s}, \bm{x},\bm{y}))) 
\end{align*}
Furthermore, policy $\pi(\bm{s}, \bm{x}) \defeq \arg\min_{\bm{y}} V({T}(\bm{s}, \bm{x}, \bm{y}))$ solves the safety games $G_k$ for $k \geq V(\bm{s_1})$.
\end{theorem}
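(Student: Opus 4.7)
The plan is to read $V(\bm{s})$ as the minimax value of an infinite-horizon pursuit--evasion game on the transition graph induced by $T$: the environment drives the running supremum of $\idx$ upward, the system drives it downward. Both directions of the equivalence will then go through Theorem~\ref{thm:bounded_synthesis_games}, which lets me translate between ``bounded running index'' and ``safety game $G_k$ solvable.''

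I would tackle the backward direction first, because it is the shortest and also establishes the policy-correctness claim. Assume $V$ solves the Bellman equations and set $\pi(\bm{s},\bm{x}) \defeq \arg\min_{\bm{y}} V(T(\bm{s},\bm{x},\bm{y}))$. The two equations directly give a pair of invariants: $V(\bm{s}) \geq \idx(\bm{s})$ from the outer $\max$, and $V(\bm{s}) \geq V(T(\bm{s},\bm{x},\pi(\bm{s},\bm{x})))$ for every environment move $\bm{x}$, since the outer $\max_{\bm{x}}$ dominates the inner minimization. A straightforward induction along any play generated by $\pi$ then yields $\idx(\bm{s}_t) \leq V(\bm{s}_t) \leq V(\bm{s_1})$ for all $t$, so the co-B\"uchi indices stay bounded by $k \defeq V(\bm{s_1})$. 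Consequently $\pi$ is winning in every $G_k$ with $k \geq V(\bm{s_1})$, and Theorem~\ref{thm:bounded_synthesis_games} concludes realizability.

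For the forward direction, I would construct $V$ by value iteration. Let $B$ be the Bellman operator given by the right-hand sides of the two equations, and set $V_0(\bm{s}) \defeq \idx(\bm{s})$. The iterates $V_{n+1} \defeq B V_n$ are monotone non-decreasing in $n$ because $B$ is order-preserving in $V$. Assuming realizability, Theorem~\ref{thm:bounded_synthesis_games} yields a finite $k$ together with a winning policy $\pi^{*}$ for $G_k$; evaluating the iterates against $\pi^{*}$ (substituting $\pi^{*}$ for the inner $\min$ and inducting symmetrically to the backward direction) shows $V_n(\bm{s_1}) \leq k$ for every $n$. Since on the reachable portion of the state space the components lie in the finite integer range $[-1,k]$, the values of $V_n$ do as well, so the iteration stabilizes and produces a fixed point $V$ of $B$ with $V(\bm{s_1}) \leq k < \infty$.

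The hard part is precisely the policy-evaluation step in the forward direction: I must argue that the existence of a winning policy for $G_k$ caps every iterate at $k$ in the non-deterministic setting, where the inner $\min$--$\max$ does not collapse to an expectation and one cannot rely on MDP-style contraction arguments. The remedy is a branch-by-branch argument showing that no environment strategy can drive the running index above $k$ when the system plays $\pi^{*}$; this reuses the same bounded-invariant reasoning as in the backward direction, but applied with a fixed, externally supplied strategy, thereby making the non-stochastic analog of policy evaluation explicit.
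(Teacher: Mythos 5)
Your proposal is correct, and its backward direction is essentially the paper's: both extract the greedy policy $\pi(\bm{s},\bm{x}) \defeq \arg\min_{\bm{y}} V(T(\bm{s},\bm{x},\bm{y}))$ and (you, explicitly; the paper, implicitly) use the two consequences $V(\bm{s}) \geq \idx(\bm{s})$ and $V(\bm{s}) \geq V(T(\bm{s},\bm{x},\pi(\bm{s},\bm{x})))$ of the fixed-point equations to keep the running co-B\"uchi index bounded by $V(\bm{s_1})$ along every $\pi$-play, then invoke Theorem~\ref{thm:bounded_synthesis_games}. Where you genuinely diverge is the forward direction. The paper takes a winning policy $\pi^*$ for some $G_k$ and \emph{defines} $V(\bm{s})$ as the maximum index reachable from $\bm{s}$ under $\pi^*$; strictly speaking that function is only a pre-fixed point ($V \geq BV$) unless $\pi^*$ happens to be greedy with respect to the value it induces, since the inner $\min_{\bm{y}}$ may select a better successor than $\pi^*$ does. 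Your construction---value iteration from $V_0 = \idx$, using monotonicity of $B$ for convergence from below and using $\pi^*$ only to cap the iterates at $k$ via the branch-by-branch policy-evaluation bound you flag as the hard step---produces the least fixed point directly and so avoids that gap; it is the more watertight of the two arguments. Two small points to tighten: the iterates need not stabilize outside the winning region (there they can grow without bound), so you should either pass to the pointwise limit in the extended integers, which remains a fixed point because every $\max$ and $\min$ ranges over a finite set, or restrict the equations to the safe states; and your policy-evaluation step should note that $T$ and $T_k$ agree on the states $\pi^*$ actually visits, which is what licenses transporting $\pi^*$ between the capped game and the uncapped Bellman equations.
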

\begin{proof}
If  $\langle \C{X}, \C{Y}, \varphi \rangle$ is realizable, then the safety game $G_k$ has a solution $\pi$ for some $k<\infty$ (Theorem \ref{thm:bounded_synthesis_games}). $V(\bm{s})$ can be defined as the maximum index of the states reachable from $\bm{s}$ following $\pi$.
In the other direction, let $V$ be a solution to the Bellman equations. Then, the greedy policy $\pi(\bm{s},\bm{x}) = \arg\min_y V(\delta(\bm{s}, \bm{x}, \bm{y}))$ solves $G_k$ for $k=V(\bm{s_1})$.
\end{proof}

Solutions to the safety games and, by extension, to the \LTL synthesis problem can be constructed from solutions to the Bellman equations. In this step, it is crucial that the transition model ($T$) is known to the agent.
%
%

\subsection{Value Iteration for Safety Games}
%
%
Value iteration for MDPs updates the Q-values of states by performing one-step lookaheads.
We naturally adapt this idea to develop a value iteration algorithm for safety games.
The Q-value estimate for a pair $(\bm{s}, \bm{x})$ is
$\hat{{Q}}(\bm{s},\bm{x}) = \max(\idx(\bm{s}), \min_{\bm{y}}\max_{\bm{x'}} Q({T}(\bm{s}, \bm{x}, \bm{y}),\bm{x'})$.
Intuitively, $\hat{{Q}}(\bm{s},\bm{x})$
performs one-step lookaheads from $\bm{s}$, and 
selects the 
lowest
Q-value 
that the system player could enforce
in an adversarial setting.
Then, the Q-value ${Q}(\bm{s},\bm{x})$ is updated to $\hat{{Q}}(\bm{s},\bm{x})$.
When the Bellman equations have a solution, 
the Q value updates converge in safe states and yield solutions to the associated safety games (Theorem \ref{thm:bellman_one_ucw_convergence}).


\begin{theorem}\label{thm:bellman_one_ucw_convergence}
If the Bellman equations in Theorem \ref{thm:bellman_one_ucw} have a solution, then the Q-value updates below make $V(\bm{s_1})$ converge to a bounded value, 
provided that the Q-value in each state is updated sufficiently often, and that the Q-value function is initialized to non-infinite values.
\[
Q(\bm{s}, \bm{x}) \leftarrow \max(\idx(\bm{s}), \min_{\bm{y}}\max_{\bm{x'}} Q({T}(\bm{s}, \bm{x}, \bm{y}),\bm{x'})
\]
Furthermore, policy
$\pi(\bm{s}, \bm{x}) \defeq \arg\min_{\bm{y}} V({T}(\bm{s}, \bm{x}, \bm{y}))$ 
converges to a solution the safety game $G_k$ for some $k<\infty$.
\end{theorem}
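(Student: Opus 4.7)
The plan is to argue convergence by treating the Q-update as iteration of a monotone Bellman operator on a finite, integer-valued state space, and then to recover a winning policy through Theorem \ref{thm:bellman_one_ucw}. Define the operator $B$ on Q-functions by
\[
(BQ)(\bm{s},\bm{x}) \defeq \max\bigl(\idx(\bm{s}),\ \min_{\bm{y}} \max_{\bm{x'}} Q(T(\bm{s},\bm{x},\bm{y}),\bm{x'})\bigr).
\]
The first step is to record that $B$ is pointwise monotone: if $Q_1 \leq Q_2$ coordinate-wise, then $BQ_1 \leq BQ_2$, because $\max$ and $\min$ both preserve the pointwise order. By the hypothesis of the theorem together with Theorem \ref{thm:bellman_one_ucw}, there exists a solution $Q^\ast$ with $V^\ast(\bm{s_1}) = \max_{\bm{x}} Q^\ast(\bm{s_1},\bm{x}) < \infty$; this $Q^\ast$ is in particular a fixed point $BQ^\ast = Q^\ast$.

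The second step is convergence from below. I would initialize $Q_0(\bm{s},\bm{x}) \defeq \idx(\bm{s})$ (any initialization dominated by $Q^\ast$ suffices, e.g.\ constant $-1$). Monotonicity gives $Q_0 \leq Q^\ast \Rightarrow BQ_0 \leq BQ^\ast = Q^\ast$, and since $Q_0 \leq BQ_0$ by construction of $B$, induction yields $Q_0 \leq BQ_0 \leq B^2 Q_0 \leq \cdots \leq Q^\ast$. All entries are integers (because $\idx$, $\min$, $\max$ preserve integrality), each entry lies in the finite set $\{-1,0,\ldots,V^\ast(\bm{s_1}),\ldots,K\}$ where $K = \max_{\bm{s}} V^\ast(\bm{s})$, so each coordinate can increase only finitely many times and the sequence stabilizes at some fixed point $Q_\infty$ of $B$ in finitely many sweeps.

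The third step is to promote synchronous to asynchronous convergence. Under the theorem's hypothesis that each Q-value is updated sufficiently often, the same monotonicity argument applies coordinate-wise: every local update replaces $Q(\bm{s},\bm{x})$ by $(BQ)(\bm{s},\bm{x})$, which is at least the current value (starting from $Q_0$ below $Q^\ast$) and still at most $Q^\ast(\bm{s},\bm{x})$. Because each entry is a bounded monotone integer sequence, each coordinate stabilizes after finitely many visits, and the overall iterate converges to a fixed point $Q_\infty \leq Q^\ast$ with $Q_\infty = BQ_\infty$. In particular $V_\infty(\bm{s_1}) = \max_{\bm{x}} Q_\infty(\bm{s_1},\bm{x}) \leq V^\ast(\bm{s_1}) < \infty$, giving the claimed bounded limit. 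Applying Theorem \ref{thm:bellman_one_ucw} to $V_\infty$ immediately yields that the greedy policy $\pi(\bm{s},\bm{x}) = \arg\min_{\bm{y}} V_\infty(T(\bm{s},\bm{x},\bm{y}))$ solves $G_k$ for $k = V_\infty(\bm{s_1})$.

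The main obstacle is the initialization clause: the theorem permits any ``non-infinite'' initialization, whereas monotone convergence from a single side most cleanly works when $Q_0 \leq Q^\ast$. For general bounded initializations, $B$ is not a contraction, so iterates from above need not decrease to $Q^\ast$ (e.g.\ a huge constant initialization is itself a fixed point). I would handle this by reading the theorem as asserting convergence to some bounded fixed point of $B$ (not necessarily the least one) and by noting that the policy guarantee only requires the limit to satisfy the Bellman equation; for the sharper claim that the greedy policy wins $G_k$ for the optimal $k$, it is necessary to initialize at or below $Q^\ast$, which the argument above covers. The remaining routine verification is that the integer-valued and bounded nature of the state descriptors (co-Büchi indices in $[-1,k]$ in each coordinate of $\bm{s}$) makes ``sufficiently often'' amount to only finitely many visits per state.
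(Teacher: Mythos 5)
Your proof is essentially correct but takes a genuinely different route from the paper. The paper argues convergence by claiming the backup operator $B$ is a \emph{contraction} on the set of safe states: it takes a fixed point $V^*$, uses $|\max_x f(x) - \max_x g(x)| \leq \max_x |f(x)-g(x)|$, and concludes $\max_{\bm{s}\in S_{\mathit{safe}}} |BV(\bm{s}) - V^*(\bm{s})| \leq \max_{\bm{s}\in S_{\mathit{safe}}} |V(\bm{s}) - V^*(\bm{s})|$, from which it asserts convergence to finite values. You instead exploit monotonicity of $B$ together with integrality and boundedness of the value range to get termination of value iteration from below in finitely many sweeps, and you separately handle the asynchronous (``updated sufficiently often'') case. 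Your route is the more defensible one: the inequality the paper establishes is only a \emph{non-expansion} (modulus $1$, not strictly less than $1$), and non-expansive maps need not drive iterates to a fixed point, so the paper's stated argument has a gap that your lattice-theoretic argument avoids. Your observation about initialization is also sharper than anything in the paper: a large constant $Q$-function is itself a fixed point of $B$, so for arbitrary ``non-infinite'' initializations the iterates can converge to a bounded fixed point whose greedy policy is not winning; the theorem's policy claim really does require initializing at or below $Q^*$ (e.g.\ $Q_0(\bm{s},\bm{x}) = \idx(\bm{s})$), which the paper never makes explicit. The one thing the paper's framing buys that yours does not emphasize is the restriction to $S_{\mathit{safe}}$ --- on unsafe states $V^*$ is infinite and no convergence claim is made --- but your argument localizes to reachable safe states just as well once the greedy policy is extracted via Theorem \ref{thm:bellman_one_ucw}.
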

\begin{proof}
To obtain the desired result, we show that the Bellman backup operator $BV(\bm{s}) = \max_x \max(\idx(\bm{s}), \min_{\bm{y}} V({T}(\bm{s}, \bm{x}, \bm{y}))$ is a \emph{contraction} in the set of safe states $S_\mathit{safe} \subseteq S$. 
Let $V^*$ be a solution to the Bellman equations, i.e., $BV^* = V^*$.
By using the property $|\max_x f(x) - \max_x g(x)| \leq \max_x | f(x) - g(x)|$, and observing that $V^*$ is finite in safe states, and infinite in unsafe states, it can be shown that $\max_{\bm{s}\in S_\mathit{safe}} | BV(x) - V^*(x)| \leq \max_{\bm{s}\in S_\mathit{safe}} | V(x) - V^*(x)|$.
Hence, the Q value updates have to converge to finite values in the set of safe states, bounded by some $k < \infty$.
It follows straightforward that the policy
$\pi(\bm{s}, \bm{x}) \defeq \arg\min_{\bm{y}} V({T}(\bm{s}, \bm{x}, \bm{y}))$ 
converges to a solution to the safety game $G_k$.
\end{proof}

\section{DQS: Deep Q-learning for LTL Synthesis}
\label{sec:dqs}

State-of-the-art exact methods for \LTL synthesis, and bounded synthesis in particular, have limited scalability. In part, this is due to the (potentially, doubly-exponential) size of the search space. We can expect, thus, similar challenges with the value iteration algorithm presented in Section \ref{sec:dyncamic_programming_for_ltl_synthesis}.
The objective in this paper is not to do value iteration for safety games. Rather, the Bellman equations (Theorems \ref{thm:bellman_one_ucw}) and the convergence of value iteration (Theorem \ref{thm:bellman_one_ucw_convergence}) set the theoretical foundations that we need to do (deep) Q-learning.

In this section we present a method to \emph{learn} a Q function, with a variant of (deep) Q-learning adapted for safety games.
Why do we want to do Q-learning, if we know the transition model? 
Our aim is not to learn a solution to the Bellman equations. Instead, we want to compute \emph{good enough} approximations that provide good guidance.
With this goal in mind, training a neural network seems to be a reasonable approach because neural networks can capture the structure of a problem and do inference.
%
%
At inference time, the network provides guidance that can be used to construct solutions.


Our approach, which we describe below, constitutes the first attempt to integrate search and learning to do \LTL synthesis.
We provided and elegant and extensible algorithm with wich a variety of search and learning algorithms can be deployed.
For example, the guidance obtained with the trained neural network can be used in combination with AND/OR search techniques such as AO$^*$.
Similarly, the guidance can be used as heuristics with more sophisticated search techniques such as LAO$^*$ \cite{HansenZ01a} in service of \LTL synthesis.
In recent research we exploited the correspondence between \LTL synthesis and AI automated planning (see, e.g., \cite{cam-mci-ijcai19}) to reduce \LTL synthesis specifications into fully observable non-deterministic planning problems that can be solved with off-the-shelf planners  \cite{cam-bai-mui-mci-icaps18,cam-mui-bai-mci-ijcai18}. The heuristic obtained with our trained neural network can be used to guide planners, also in combination with domain-independent heuristics.
Similar techniques than those that we present here can be used to learn heuristics and guide planners in planning problems with \LTL goals (see, e.g., \cite{cam-tri-mui-bai-mci-aaai17,cam-mci-ijcai19}.

\subsection{Deep Learning for LTL Synthesis}

We propose the use of a neural network to approximate the Q function.
%
%
We mainly base our inspiration on recent success in \emph{Deep Q-learning for MDPs} (DQN), where a neural network is used to approximate the Q function in reinforcement learning for MDPs \cite{MnihKSRVBGRFOPB15_dqn}.
%
Our approach shares other commonalities with existing differential approaches to program synthesis that use deep learning.
Like the \emph{Neural Turing Machine}  \cite{GravesWD14}, which augments neural networks with external memory. In comparison, our approach uses automata as compact memory.
%

For its similarities with DQN, we name our \emph{Deep Q-learning for \LTL Synthesis} approach DQS.
Like DQN, DQS uses a neural network to approximate the Q function. The network is trained in batches of \emph{states} obtained from a prioritized experience replay buffer.
Experience is obtained by running search episodes, in which the moves of the environment and system players are simulated according to some exploration policy.
The Q-values estimated by the neural network are used to guide search, as well as to extract solutions.
%
Like DQN, DQS is not guaranteed to converge to a solution to the Bellman equations. However, \emph{good enough} approximations may provide effective guidance, and yield correct solutions to the safety game and synthesis problems.
We provide further details of the components of DQS below.


\paragraph{Q-value network:}
The Q-value network $\bm{Q}_\theta$, 
with parameters $\theta$, takes as input a state vector 
%
$\bm{s}$, and outputs a vector $\bm{Q}_\theta(\bm{s}) = \langle  Q_\theta(\bm{s},1),\ldots, Q_\theta(\bm{s}, D) \rangle$.
Each $Q_\theta(\bm{s},d)$ is an estimation of $Q(\bm{s}, \bm{x})$, if $\bm{x}$ is a binary representation of $d$.
We sometimes abuse notation, and confuse $d$ with its binary representation $\bm{x}$.
The input of $\bm{Q}_\theta$ has the same dimension as the game states, i.e., the number of automaton states in $\C{A}_\varphi$ -- or the sum of automata states, if formula decompositions are exploited.
The output has $D = 2^{|\C{X}|}$ neurons.
The network weights are intitialized to gaussian values close to zero.

\paragraph{Training episodes:}
Training episodes start in the initial state of the game, $\bm{s_1}$. 
Environment and system' actions are simulated according to an exploration policy. New states are generated according to the game transition function, ${T}$, and added into a prioritized experience replay buffer.
Episodes last until a horizon bound is reached, or a state $\bm{s} = \langle -1, \ldots, -1 \rangle$ is reached, or a state $\bm{s}$ with $\idx(\bm{s}) > K$ is reached for some hyperparameter $K$.
After that happens, a new training episode starts.

\paragraph{Exploration policy:}
Different exploration policies can be designed. Here, we use an adaptation of the epsilon-greedy policy commonly used in reinforcement learning. 
At the beginning of each episode, with probability $\mu$, we set an \emph{epsilon-greedy} exploration policy for the environment player. Otherwise, we set it to be \emph{greedy}.
We do the same for the system player, with independent probability.
Greedy environment policies are $\pi(\bm{s}) = \arg\max_{\bm{x}} Q_\theta(\bm{s},\bm{x})$. For the system, greedy policies are $\pi(\bm{s},\bm{x}) = \arg\min_{\bm{y}}  \max_{\bm{x'}} {Q}_{\theta}^{(\bm{x'})}({T}(\bm{s}, \bm{x}, \bm{y}), \bm{x'})$.
Epsilon-greedy policies select an action at random with probability $\epsilon$, and otherwise act greedily.
We use $\mu = \epsilon = 0.2$.

\paragraph{Batch learning step:} 
Learning is done in batches of states, sampled from a prioritized experience replay buffer.
A learning step involves computing, for each state $\bm{s}$ in the batch, a new estimate of the Q-values, $\hat{Q}_\theta(\bm{s}, \bm{x})$.
%
%
%
The updates in each ${\hat{Q}}_{\theta}(\bm{s}, \bm{x})$ do a one-step lookahead as follows:

\begin{align*}
    \bm{Q}_{\theta}(\bm{s}) &\xleftarrow{\alpha} \langle {\hat{Q}}_{\theta}(\bm{s},1), \ldots, {\hat{Q}}_{\theta}(\bm{s}, 2^{|\C{X}|}) \rangle \\
    %
    {\hat{Q}}_{\theta}(\bm{s}, \bm{x}) &= -1 \text{, if } \bm{s} = \la -1, \ldots, -1 \ra \\
    {\hat{Q}}_{\theta}(\bm{s}, \bm{x}) &= \min(K, \max(\idx(\bm{s}), \\
    & \min_{\bm{s'} = {T}(\bm{s}, \bm{x}, \bm{y})} \max_{\bm{x'}} \round( {Q}_{\theta}(\bm{s'}, \bm{x'}) ))) \text{, otherwise. } 
\end{align*}
where $\round(x)$ returns the closest integer to $x$. The values are rounded, as co-B\"uchi indexes are integers.
%
To prevent the Q-value to be learned from diverging to infinite, we bound the values by $K< \infty$.
The idea is to
deem all states $\bm{s}$ with $ V(\bm{s}) \geq K$ as losing states, and focus on producing good Q-value estimates in game states that have co-B\"uchi index lower than $K$.
The batch learning step updates the network weigths as to minimize the sum of TD errors accross all the states in the batch, with some \emph{learning rate} $\alpha\in(0,1)$. 
The TD error in state $\bm{s}$ is $\Sigma_{\bm{x}} |{Q}(\bm{s},\bm{x}) - \hat{{Q}}(\bm{s},\bm{x})|$.
%
To incentive solutions with low co-B\"uchi indexes, we perform L2 regularisation.
We do a batch learning step every four exploration steps, which has reported good results in DQN \cite{MnihKSRVBGRFOPB15_dqn}.




\paragraph{Prioritized experience replay buffer:}
Explored states $\bm{s}$ along episodes are added into a \emph{prioritized experience replay} buffer \cite{SchaulQAS15_prioritized_experience_replay}, with a preference value that equals their current TD error $\Sigma_{\bm{x}} |{Q}(\bm{s},\bm{x}) - \hat{{Q}}(\bm{s},\bm{x})|$.
In a learning step, a batch of states is sampled in a manner that higher preference is given to those states with higher TD error.
After a batch learning step, the preference value
of the sampled states
are
updated to their new TD error.

\paragraph{Double DQS:}
We investigate an enhancement of DQS. We borrow ideas from \emph{Double DQN} (DDQN) for MDPs, where a target network is used to stabilize learning \cite{Hasselt10_ddqn}.
In \emph{Double DQS} (DDQS) we use a target network $\bm{Q}_t$ to estimate the Q-values in the one-step lookaheads, and update $\bm{Q}_t$ to $\bm{Q}_\theta$ 
at the end of each episode.
In DDQS the equations of the batch learning step  become:
%
%
\begin{align*}
            {\hat{Q}}_{\theta}(\bm{s}, \bm{x}) &= \min(K, \max(\idx(\bm{s}), \\
    & \min_{\bm{s'} = {T}(\bm{s}, \bm{x}, \bm{y})} \max_{\bm{x'}} \round( {Q}_t(\bm{s'}, \bm{x'}) ))) 
\end{align*}

\subsection{Solution Extraction and Verification}

An advantage of knowing the transition function $T$ is that solutions extracted from the neural network can be verified for correctness.
At the end of each episode, we verify whether the greedy policy for the system player obtained from the Q-value network ${\bm{Q}}_\theta$ is a solution to the safety game $G_K$.
Recall that a greedy sytem policy is $\pi(\bm{s},\bm{x}) = \arg\min_{\bm{y}}  \max_{\bm{x'}} {Q}_{\theta}^{(\bm{x'})}({T}(\bm{s}, \bm{x}, \bm{y}), \bm{x'})$.
The verification step can be performed by 
doing an exhaustive enumeration of all
reachable game states $\bm{s}$ by $\pi$, from the initial state $\bm{s_1}$, and checking that all have $\idx(\bm{s}) < K$.

\paragraph{Learning from losing gameplays:}
The verification step fails when it encounters a losing partial play, that is, a partial play 
$\play = \bm{s_1} \cdots \bm{s_n}$ with $\idx(\bm{s_n}) = K$.
When this occurs, we run a series of $n$ batch learning steps.
Each batch learning step includes samples from the prioritized experience replay buffer and a state in $\rho$, starting backwards from $\bm{s_n}$.

\subsection{Stronger Supervision Signals} 

One of the challenges in reinforcement learning is having to deal with sparse rewards.
In our learning framework for safety games there are no rewards, but rather supervision signals that enforce the Q-value estimates in a state $\bm{s}$ are not lower than its co-B\"uchi index, $\idx({\bm{s}})$.
Arguably, the problem of sparsity may also manifest in our approach when 
the co-B\"uchi index of visited states along training episodes face sparse \emph{phase transitions} (i.e., infrequent changes). In consequence, it may take a large number of episodes for these values to be propagated in the Q-value network.

We propose the use of a \emph{potential} function $\Phi: S \rightarrow [0,1)$ to provide stronger supervision signal in the learning process. Intuitively, $\Phi(\bm{s})$ indicates how close the co-B\"uchi index of a state $\bm{s} = \la s^{(1)}, \ldots, s^{(m)} \ra$ is from experiencing a phase transition that increments its value.
Formally:
\begin{align*}
    \Phi(\bm{s}) &\defeq 
    \begin{cases}
        0 \text{~~~if } \idx(\bm{s}) = -1 \\
        \max \{ 1/(d(q_i)+1) \vert s^{(i)} = \idx(\bm{s}) \} & \text{otherwise}
    \end{cases}
\end{align*}
where $d(q)$ is the distance (minimum number of outer transitions) between automaton state $q$ and a rejecting state. Without loss of generality, we presume in a \UCW rejecting states are reachable from any state, and thus $d(q)$ is well defined.


The learning updates are redefined, as shown below, to include the supervision given by the potentials. Note, 
this time 
we take the integer part of ${Q}_{\theta}$, $\floor( {Q}_{\theta}(\bm{s'}, \bm{x'}))$.
\begin{align*}
    \bm{Q}_{\theta}(\bm{s}) &\xleftarrow{\alpha} \langle {\hat{Q}}_{\theta}(\bm{s},1), \ldots, {\hat{Q}}_{\theta}(\bm{s}, 2^{|\C{X}|}) \rangle \\
    {\hat{Q}}_{\theta}(\bm{s}, \bm{x}) &= -1 \text{, if } \bm{s} = \la -1, \ldots, -1 \ra \\
    {\hat{Q}}_{\theta}(\bm{s}, \bm{x}) &= \min(K, \max(\idx(\bm{s}), \\
    \min_{\bm{s'} \defeq {T}(\bm{s}, \bm{x}, \bm{y})}& \max_{\bm{x'}} ( \Phi(\bm{s'}) + \floor( {Q}_{\theta}(\bm{s'}, \bm{x'}) ))) \text{, otherwise. } 
\end{align*}

\section{Experiments}

We implemented our Deep Q-learning approach to \LTL synthesis.
%
%
We used 
\emph{Spot} \cite{duret-spot} to transform \LTL formulae into \UCW automata, and
%
Tensorflow 2.0 library for deep learning.
%
Experiments were conducted using 2.4GHz CPU Linux machines
with 10GB of memory.


The purpose of our experiments was not to compete with state-of-the-art tools for \LTL synthesis---these are much faster. Rather, we wanted to evaluate the potential for providing effective search guidance of our neural-based approach.

\paragraph{Hyperparameters:}
%
We experimented with different network sizes. Interestingly, very small networks were often sufficient for learning good guidance.
We fixed a network with two dense hidden layers. The number of neurons corresponded to the input size---i.e., number of \UCW automaton states.
%
%
We employed an $\epsilon$-greedy exploration policy with $\mu=\epsilon=0.2$, and Adam optimizer with learning rate adjusted to $\alpha = 0.005$. 
We set $K=4$ and a horizon bound of $50$ timesteps.
These numbers were set large enough to find solutions.
Search stopped after 1000 episodes.
We used a prioritized experience replay buffer with batch size of 32 states. The learning step in $\bm{Q}_\theta$ was done every 4 timesteps.
In DDQS, the target network $\bm{Q}_t$ was updated at the end of each episode.


\paragraph{Benchmarks:}
We evaluated our system on a family of 19 \emph{lilydemo} benchmark problems retrieved from the \LTL synthesis competition SYNTCOMP \cite{syntcomp}.

\paragraph{Configurations:}
We tested the following configurations:
\begin{itemize}
    \item DQS[$-$]: DQS, reusing losing gameplays for learning.
    \item DDQS: Implementation of DQS with a target network.
    \item DDQS[$-$]: DDQS, reusing losing gameplays for learning.
    \item DDQS[$-$, $\phi$]: Like DDQS[$-$], also using potentials for a stronger supervision signal.
\end{itemize}
In addition, we tested the configurations above using automata decompositions of the \LTL specification.
We refer to those configurations as \emph{dec-DDQS}, etc.
Each configuration was tested in each benchmark a total of 20 times.

\paragraph{Impact of using a target network:}
The Q-values learned using a target network were more realistic than those learned without a target network, which tended to learn higher Q-values.
These observed results resonate with the \emph{optimistic} behaviour often manifested by DQN without the use of a target network.
In terms of performance, DDQS was able to learn faster than DQS (i.e., wth a fewer number of episodes and batch training steps), but the differences were not huge in the benchmarks being tested (see Figure \ref{fig:experiments} (left and right)).

\begin{figure}[t]
    \centering
    \begin{subfigure}[t]{0.5\columnwidth}
        \centering
        \includegraphics[width=\textwidth]{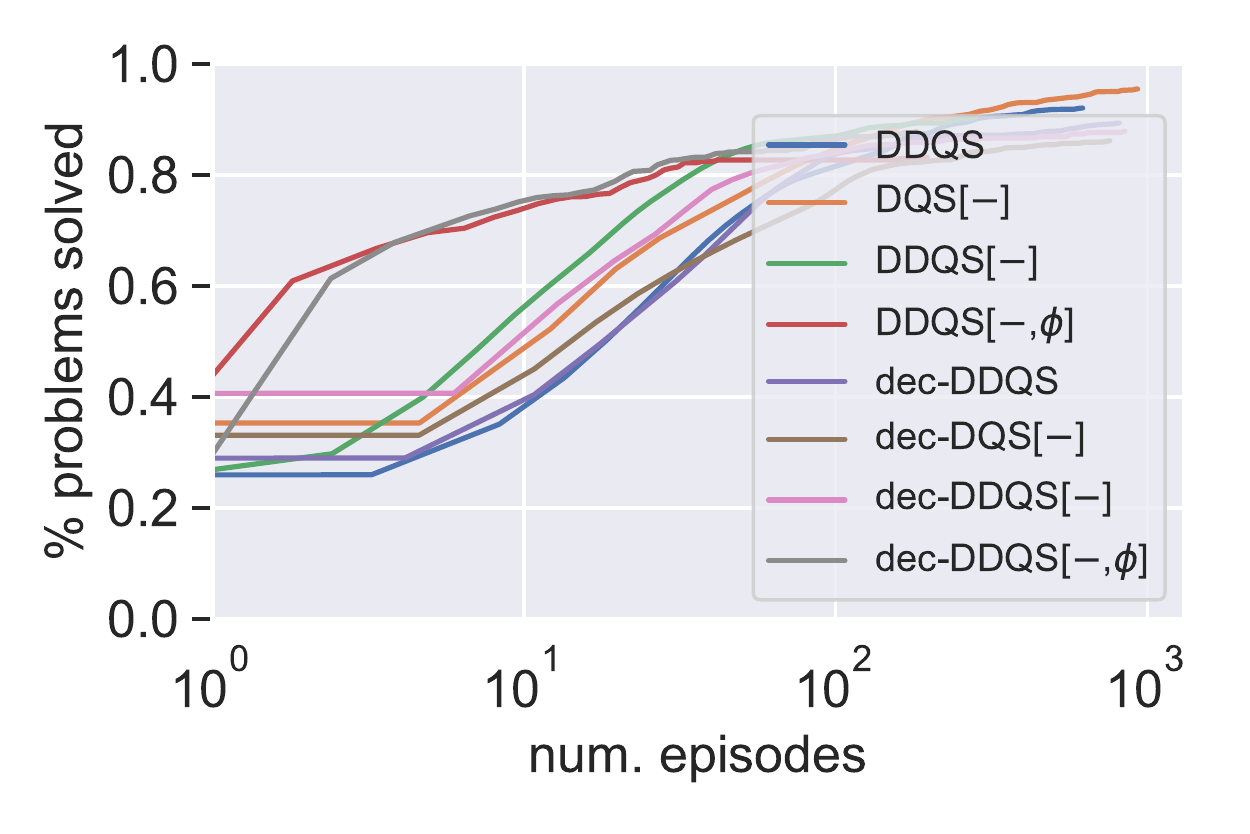}
        \label{fig:n_episodes_cumulative}
    \end{subfigure}%
    \begin{subfigure}[t]{0.5\columnwidth}
        \centering
        \includegraphics[width=\textwidth]{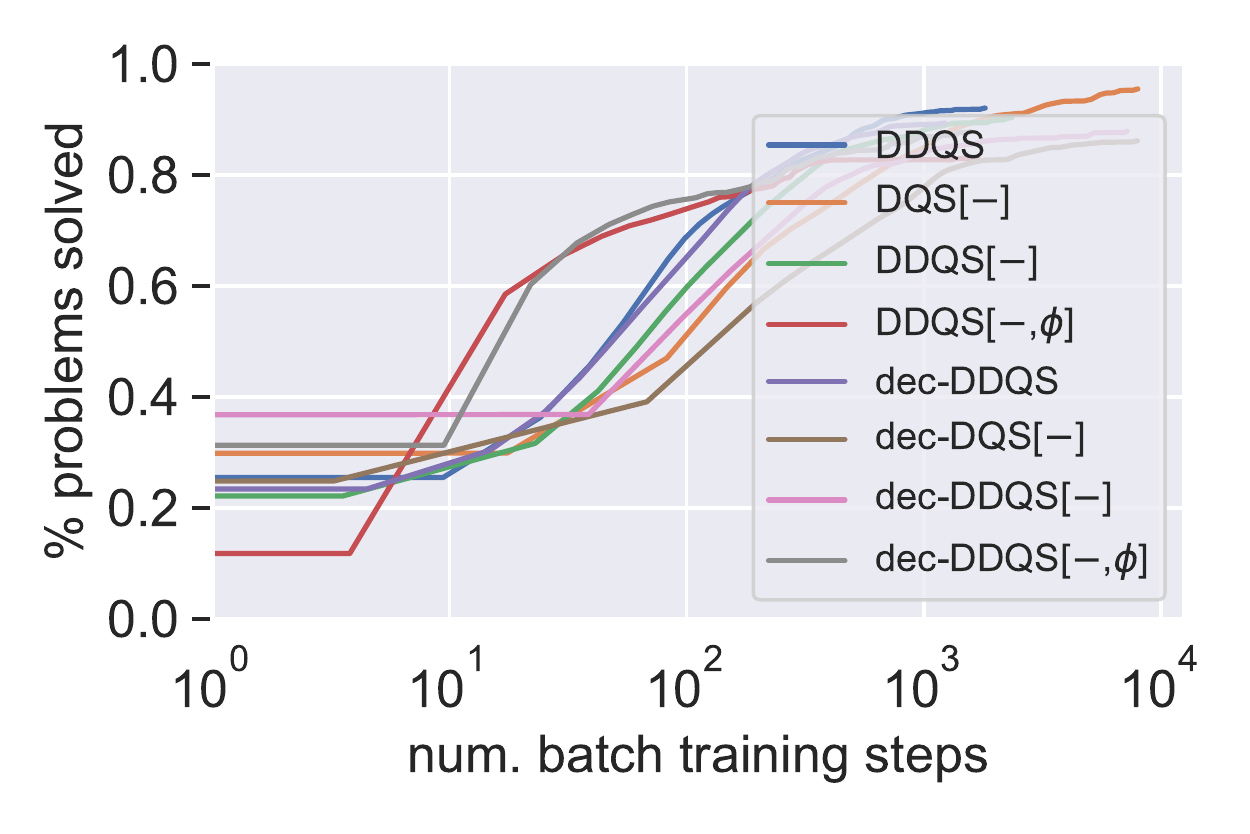}
        \label{fig:n_batch_training_steps}
    \end{subfigure}%
    \caption{
    Problems solved with respect to the number of episodes (left) and the number of batch learning steps (right).
    We compared different configurations of DQS, using a target network (DDQS), reusing losing gameplays for learning ($-$), using potentials for a stronger supervision signal ($\phi$), and using automata decompositions (dec). Each configuration was tested a total of 20 times in each of the 10 \emph{lilydemo} benchmark problems.
    }
    \label{fig:experiments}
\end{figure}

\paragraph{Impact of using decompositions:}
The use of automata decompositions may be necessary to scale to those problems with large specifications that, for computational limitations, cannot be transformed into a single automaton.
Our learned greedy policies were not able to find exact solutions in large specifications where automata decompositions become necessary. Thus, further work needs be done to better exploit the guidance of the neural network in the search for solutions.
Still, we conducted a study of the the impact of automata decompositions in  the \emph{lilydemo} benchmark set.
The use of automata decompositions translated into a larger number of automaton states, and therefore, required a larger number of input neurons (see Figure \ref{fig:experiments_2} (left)). The use of automata decompositions also required more training episodes and steps, but not a huge number compared to single-automaton transformations of the \LTL formula (see Figures \ref{fig:experiments} and \ref{fig:experiments_2}).

\paragraph{Impact of using losing gameplays:}
By exploiting losing gameplays, our approaches were able to learn with fewer learning episodes and batch learning steps (see Figure \ref{fig:experiments}). 

\begin{figure}[t]
    \centering
    \begin{subfigure}[t]{0.5\columnwidth}
        \centering
        \includegraphics[width=\textwidth]{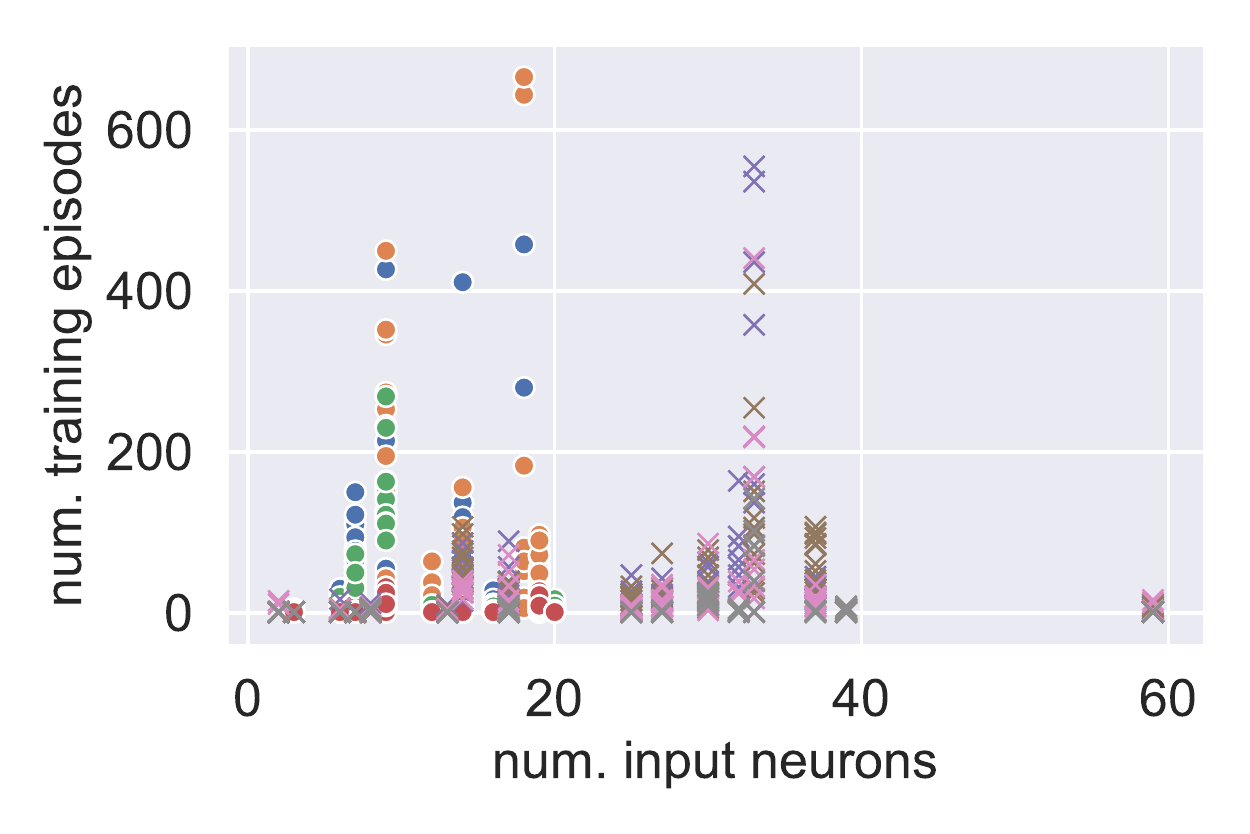}
        \label{fig:n_episodes_vs_size}
    \end{subfigure}%
    \begin{subfigure}[t]{0.5\columnwidth}
        \centering
        \includegraphics[width=\textwidth]{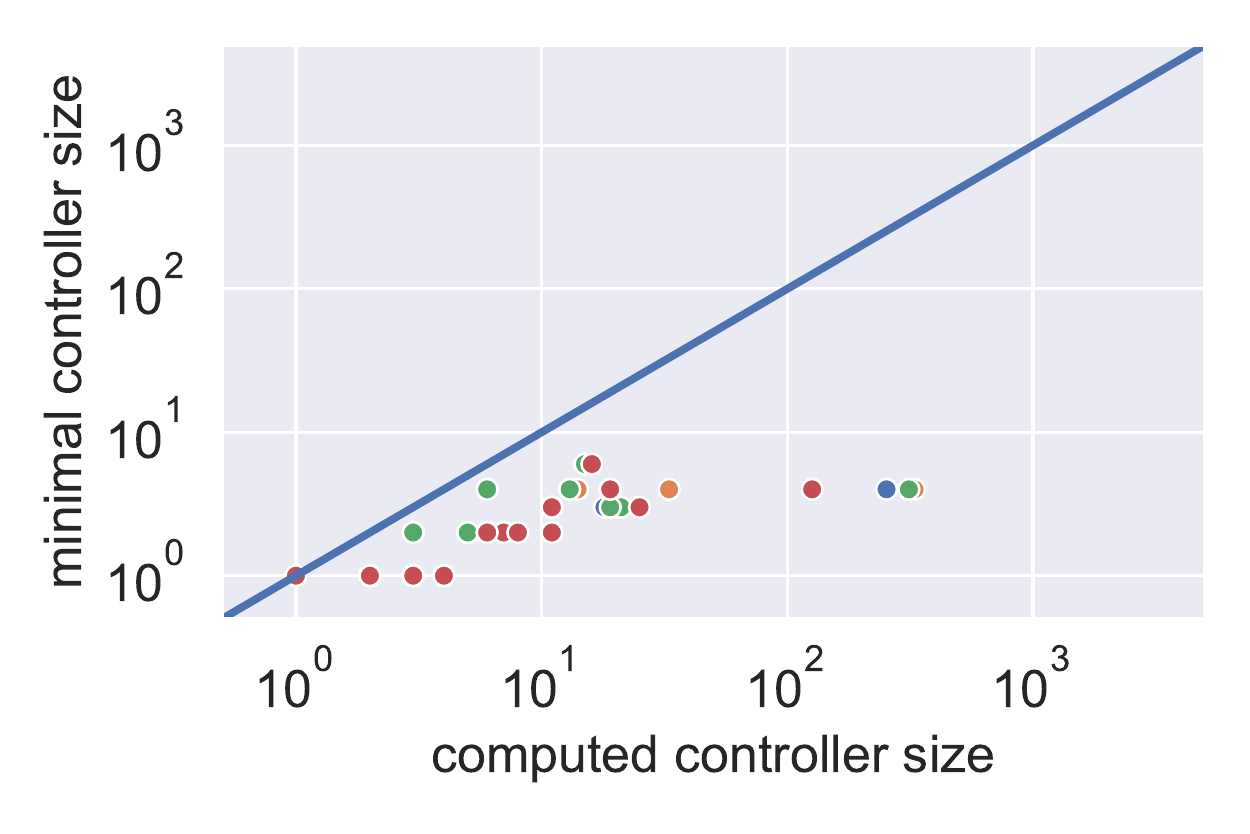}
        \label{fig:controller_sizes}
    \end{subfigure}
    \caption{
    Left: training episodes with respect to the input size of the neural network. Datapoints from algorithms that do not use decomposition are represented with circles, and otherwise represented with crosses. 
    Right: size of the controllers obtained with our neural-based methods, compared with the minimal size of the controllers as reported in SYNTCOMP \cite{syntcomp}.
    }
    \label{fig:experiments_2}
\end{figure}

\paragraph{Impact of using potentials:}
The use of a stronger supervision signal in the form of potentials greatly improved the learning process.
We can observe in Figure \ref{fig:experiments} (left and right) that DDQS[$-$,$\phi$] and dec-DDQS[$-$,$\phi$] greatly outperformed all other configurations that do not make use of potentials.

\paragraph{Size of controllers:}
We compared the size of the controllers obtained with the learned greedy policies in the different configurations of our learning-based system, with the minimal size of the controllers that solve the benchmark problems being tested (as reported in \cite{syntcomp}).
The controllers that we produced are significantly larger than the size of the minimal controllers for each benchmark problem (see Figure \ref{fig:experiments_2} (right)).
This suggests that the performance of our system may benefit from the combination of more sophisticated search techniques---other than just epsilon-greedy exploration policies, and greedy execution policies---to prune the search space.

\section{Discussion and Future Work}


We
addressed 
\LTL synthesis, a formulation of program synthesis from specification that is 2EXP-complete, and
that automatically generates programs that are correct by construction.
%
Exact methods have limited scalability.
The development of novel techniques with the potential to scale is crucial.

We presented the first approach to \LTL synthesis that combines two scalable methods: search and learning.
Our novel approach reformulates \LTL synthesis as an optimization problem. 
We reformulated \LTL synthesis as a dynamic programming problem,
and explored deep Q-learning to approximate solutions. Ultimately, our objective was to train neural networks to provide good guidance. 
%
%
%
Our approach shares commonalities with neuro-symbolic and neural-guided search approaches in using learned properties to guide search. 
Like the \emph{Neural Turing Machine}, which augments neural networks with external memory \cite{GravesWD14}, 
we use \emph{automata} as compact memory.


We were interested in evaluating the potential for providing effective search guidance of our neural-based approach.
%
In our experiments, 
we 
solved synthesis benchmarks 
by virtue of simply executing policies that acted greedily 
with respect to the neural network guidance.
In many cases, the network was trained using only a few dozen episodes in problems whose solutions are simple, but where the size of the search space is $\bigO(2^{30})$ or more.
%
%
%
%
%
%
Furthermore, we found that simple enhancements, like reusing losing plays, could significantly improve performance. We also found that using potentials to strengthen the supervision signal proved extremely beneficial.
While we did not solve the largest benchmark problems in SYNTCOMP, and our approach did not manifest state-of-the-art performance, we believe that the combination of learning and search proposed here provides a foundation for \LTL synthesis and opens a new avenue for research.

\bibliography{references}

\providecommand{\Proceedings}{Proceedings\xspace}
  \providecommand{\Conference}{Conference\xspace}
  \providecommand{\International}{International\xspace}
  \providecommand{\Artificial}{Artificial\xspace}
  \providecommand{\Intelligence}{Intelligence\xspace}
  \providecommand{\AI}{Artificial Intelligence}
  \providecommand{\Scheduling}{Scheduling\xspace} \providecommand{\ofthe}{of
  the\xspace} \providecommand{\longshort}[2]{#1 (#2)}
  \providecommand{\longshortnopar}[2]{#1}
\begin{thebibliography}{}

\bibitem[Balog et~al., 2017]{BalogGBNT17}
Balog, M., Gaunt, A.~L., Brockschmidt, M., Nowozin, S., and Tarlow, D. (2017).
\newblock Deepcoder: Learning to write programs.
\newblock In {\em \longshort{\Proceedings of the 5th \International Conference
  on Learning Representations}{ICLR}}.

\bibitem[Bohy et~al., 2012]{BohyBFJR12-acaciaplus}
Bohy, A., Bruy{\`{e}}re, V., Filiot, E., Jin, N., and Raskin, J. (2012).
\newblock Acacia+, a tool for {LTL} synthesis.
\newblock In {\em \longshort{\Proceedings of the 24th \International Conference
  on Computer Aided Verification}{CAV}}, pages 652--657.

\bibitem[Camacho et~al., 2018a]{cam-bai-mui-mci-icaps18}
Camacho, A., Baier, J.~A., Muise, C.~J., and McIlraith, S.~A. (2018a).
\newblock Finite {LTL} synthesis as planning.
\newblock In {\em \longshort{\Proceedings of the 28th \International Conference
  on Automated Planning and Scheduling}{ICAPS}}, pages 29--38.

\bibitem[Camacho and McIlraith, 2019]{cam-mci-ijcai19}
Camacho, A. and McIlraith, S.~A. (2019).
\newblock Strong fully observable non-deterministic planning with {LTL} and
  {LTL}-f goals.
\newblock In {\em \longshort{\Proceedings \ofthe 28th \International Joint
  \Conference on \AI{}}{IJCAI}}, pages 5523--5531.

\bibitem[Camacho et~al., 2018b]{cam-mui-bai-mci-ijcai18}
Camacho, A., Muise, C.~J., Baier, J.~A., and McIlraith, S.~A. (2018b).
\newblock {LTL} realizability via safety and reachability games.
\newblock In {\em \longshort{\Proceedings \ofthe 27th \International Joint
  \Conference on \AI{}}{IJCAI}}, pages 4683--4691.

\bibitem[Camacho et~al., 2017]{cam-tri-mui-bai-mci-aaai17}
Camacho, A., Triantafillou, E., Muise, C.~J., Baier, J.~A., and McIlraith,
  S.~A. (2017).
\newblock Non-deterministic planning with temporally extended goals: {LTL} over
  finite and infinite traces.
\newblock In {\em \longshort{\Proceedings \ofthe 31st AAAI \Conference on
  \AI{}}{AAAI}}, pages 3716--3724.

\bibitem[Chen et~al., 2018]{ChenLS18}
Chen, X., Liu, C., and Song, D. (2018).
\newblock Towards synthesizing complex programs from input-output examples.
\newblock In {\em \longshort{\Proceedings of the 6th \International Conference
  on Learning Representations}{ICLR}}.

\bibitem[Church, 1957]{Church57}
Church, A. (1957).
\newblock Applications of recursive arithmetic to the problem of circuit
  synthesis.
\newblock {\em Summaries of the Summer Institute of Symbolic Logic, Cornell
  University 1957}, 1:3--50.

\bibitem[Duret-Lutz et~al., 2016]{duret-spot}
Duret-Lutz, A., Lewkowicz, A., Fauchille, A., Michaud, T., Renault, E., and Xu,
  L. (2016).
\newblock Spot 2.0 --- a framework for {LTL} and $\omega$-automata
  manipulation.
\newblock In {\em \longshort{\Proceedings of the 14th \International Symposium
  on Automated Technology for Verification and Analysis}{ATVA}}, volume 9938 of
  {\em \longshortnopar{Lecture notes in Computer Science}{LNCS}}, pages
  122--129. Springer.

\bibitem[Ellis et~al., 2016]{EllisST16}
Ellis, K., Solar{-}Lezama, A., and Tenenbaum, J. (2016).
\newblock Sampling for {B}ayesian program learning.
\newblock In {\em \longshort{\Proceedings of the 28th Annual Conference on
  Advances in Neural Information Processing Systems}{NIPS}}, pages 1289--1297.

\bibitem[Gaunt et~al., 2016]{GauntBSKKTT16}
Gaunt, A.~L., Brockschmidt, M., Singh, R., Kushman, N., Kohli, P., Taylor, J.,
  and Tarlow, D. (2016).
\newblock Terpret: {A} probabilistic programming language for program
  induction.
\newblock {\em CoRR}, abs/1608.04428.

\bibitem[Graves et~al., 2014]{GravesWD14}
Graves, A., Wayne, G., and Danihelka, I. (2014).
\newblock Neural turing machines.
\newblock {\em CoRR}, abs/1410.5401.

\bibitem[Gulwani, 2016]{gulwani16}
Gulwani, S. (2016).
\newblock Programming by examples - and its applications in data wrangling.
\newblock In {\em Dependable Software Systems Engineering}, volume~45 of {\em
  {NATO} Science for Peace and Security Series - {D:} Information and
  Communication Security}, pages 137--158. {IOS} Press.

\bibitem[Hansen and Zilberstein, 2001]{HansenZ01a}
Hansen, E.~A. and Zilberstein, S. (2001).
\newblock Lao\({}^{\mbox{*}}\): {A} heuristic search algorithm that finds
  solutions with loops.
\newblock {\em Artificial Intelligence}, 129(1-2):35--62.

\bibitem[Jacobs et~al., 2019]{jacobs-syntcomp19}
Jacobs, S., Bloem, R., Colange, M., Faymonville, P., Finkbeiner, B., Khalimov,
  A., Klein, F., Luttenberger, M., Meyer, P.~J., Michaud, T., Sakr, M.,
  Sickert, S., Tentrup, L., and Walker, A. (2019).
\newblock The 5th reactive synthesis competition {(SYNTCOMP} 2018): Benchmarks,
  participants {\&} results.
\newblock {\em CoRR}, abs/1904.07736.

\bibitem[Jobstmann and Bloem, 2006]{JobstmannB06_lily}
Jobstmann, B. and Bloem, R. (2006).
\newblock Optimizations for {LTL} synthesis.
\newblock In {\em \longshort{\Proceedings of the 6th International Conference
  on Formal Methods in Computer-Aided Design}{FMCAD}}, pages 117--124.

\bibitem[Kupferman and Vardi, 2005]{KupfermanV05}
Kupferman, O. and Vardi, M.~Y. (2005).
\newblock Safraless decision procedures.
\newblock In {\em \longshort{\Proceedings of the 46th IEEE Symposium on
  Foundations of Computer Science}{FOCS}}, pages 531--542.

\bibitem[Littman, 1994]{Littman94}
Littman, M.~L. (1994).
\newblock Markov games as a framework for multi-agent reinforcement learning.
\newblock In {\em \longshort{\Proceedings of the 11th \International Conference
  on Machine Learning}{ICML}}, pages 157--163.

\bibitem[Mnih et~al., 2015]{MnihKSRVBGRFOPB15_dqn}
Mnih, V., Kavukcuoglu, K., Silver, D., Rusu, A.~A., Veness, J., Bellemare,
  M.~G., Graves, A., Riedmiller, M.~A., Fidjeland, A., Ostrovski, G., Petersen,
  S., Beattie, C., Sadik, A., Antonoglou, I., King, H., Kumaran, D., Wierstra,
  D., Legg, S., and Hassabis, D. (2015).
\newblock Human-level control through deep reinforcement learning.
\newblock {\em Nature}, 518(7540):529--533.

\bibitem[Parisotto et~al., 2017]{ParisottoMS0ZK17}
Parisotto, E., Mohamed, A., Singh, R., Li, L., Zhou, D., and Kohli, P. (2017).
\newblock Neuro-symbolic program synthesis.
\newblock In {\em \longshort{\Proceedings of the 5th \International Conference
  on Learning Representations}{ICLR}}.

\bibitem[Pnueli, 1977]{pnueli77}
Pnueli, A. (1977).
\newblock The temporal logic of programs.
\newblock In {\em \longshort{\Proceedings of the 18th IEEE Symposium on
  Foundations of Computer Science}{FOCS}}, pages 46--57.

\bibitem[Pnueli and Rosner, 1989]{pnu-ros-popl89}
Pnueli, A. and Rosner, R. (1989).
\newblock On the synthesis of a reactive module.
\newblock In {\em Proceedings of the 16th {ACM SIGPLAN-SIGACT} Symposium on
  Principles of Programming Languages}, pages 179--190.

\bibitem[Schaul et~al., 2016]{SchaulQAS15_prioritized_experience_replay}
Schaul, T., Quan, J., Antonoglou, I., and Silver, D. (2016).
\newblock Prioritized experience replay.
\newblock In {\em \longshort{\Proceedings of the 4th \International Conference
  on Learning Representations}{ICLR}}.

\bibitem[Schewe and Finkbeiner, 2007]{ScheweF07a}
Schewe, S. and Finkbeiner, B. (2007).
\newblock Bounded synthesis.
\newblock In {\em \longshort{\Proceedings of the 5th \International Symposium
  on Automated Technology for Verification and Analysis}{ATVA}}, pages
  474--488.

\bibitem[Silver et~al., 2018]{Silver2018AGR}
Silver, D., Hubert, T., Schrittwieser, J., Antonoglou, I., Lai, M., Guez, A.,
  Lanctot, M., Sifre, L., Kumaran, D., Graepel, T., Lillicrap, T.~P., Simonyan,
  K., and Hassabis, D. (2018).
\newblock A general reinforcement learning algorithm that masters chess, shogi,
  and go through self-play.
\newblock {\em Science}, 362:1140--1144.

\bibitem[{SYNTCOMP}, 2019]{syntcomp}
{SYNTCOMP} (2019).
\newblock The reactive synthesis competition.
\newblock \url{http://www.syntcomp.org}.
\newblock Accessed: 2019-10-31.

\bibitem[van Hasselt, 2010]{Hasselt10_ddqn}
van Hasselt, H. (2010).
\newblock Double q-learning.
\newblock In {\em \longshort{\Proceedings of the 24th Annual Conference on
  Advances in Neural Information Processing Systems}{NIPS}}, pages 2613--2621.

\bibitem[Zhang et~al., 2018]{ZhangRFLBMUZ18}
Zhang, L., Rosenblatt, G., Fetaya, E., Liao, R., Byrd, W.~E., Might, M.,
  Urtasun, R., and Zemel, R.~S. (2018).
\newblock Neural guided constraint logic programming for program synthesis.
\newblock In {\em \longshort{\Proceedings of the 31st Annual Conference on
  Advances in Neural Information Processing Systems}{NeurIPS}}, pages
  1744--1753.

\end{thebibliography}
\bibliographystyle{apalike}

\end{document}